\documentclass{article}

\usepackage{microtype}
\usepackage{graphicx}
\usepackage{subcaption}
\usepackage{booktabs} 
\usepackage{array} 
\usepackage{algorithm}
\usepackage{algorithmic}
\usepackage{hyperref}

\usepackage[preprint]{icml2026}

\usepackage{amsmath}
\usepackage{amssymb}
\usepackage{mathtools}
\usepackage{amsthm}

\usepackage[capitalize,noabbrev]{cleveref}

\theoremstyle{plain}
\newtheorem{theorem}{Theorem}[section]

\theoremstyle{definition}

\newtheorem{assumption}[theorem]{Assumption}
\theoremstyle{remark}

\usepackage{multirow}

\usepackage[textsize=tiny]{todonotes}

\icmltitlerunning{ExpAlign: Expectation-Guided Vision–Language Alignment for Open-Vocabulary Grounding}

\begin{document}

\twocolumn[
  \icmltitle{ExpAlign: Expectation-Guided Vision–Language Alignment for Open-Vocabulary Grounding}



  \icmlsetsymbol{equal}{*}

  \begin{icmlauthorlist}
    \icmlauthor{Junyi Hu}{thu}
    \icmlauthor{Tian Bai}{uestc,comp}
    \icmlauthor{Fengyi Wu}{uestc}
    \icmlauthor{Wenyan Li}{ku}
    \icmlauthor{Zhenming Peng}{uestc}
    \icmlauthor{Yi Zhang}{thu}
  \end{icmlauthorlist}

  \icmlaffiliation{thu}{Department of Automation, Tsinghua University, China}
  \icmlaffiliation{uestc}{University of Electronic Science and Technology of China, China}
  \icmlaffiliation{ku}{University of Copenhagen, Denmark}
  \icmlaffiliation{comp}{Lingsu Lab, China}

  \icmlcorrespondingauthor{Yi Zhang}{zhyi@mail.tsinghua.edu.cn}

  \icmlkeywords{Machine Learning, ICML}

  \vskip 0.3in
]



\printAffiliationsAndNotice{}  

\begin{abstract}
  Open-vocabulary grounding requires accurate vision-language alignment under weak supervision, yet existing methods either rely on global sentence embeddings that lack fine-grained expressiveness or introduce token-level alignment with explicit supervision or heavy cross-attention designs. We propose \textbf{ExpAlign}, a theoretically grounded vision-language alignment framework built on a principled multiple instance learning formulation. ExpAlign introduces an Expectation Alignment Head that performs attention-based soft MIL pooling over token-region similarities, enabling implicit token and instance selection without additional annotations. To further stabilize alignment learning, we develop an energy-based multi-scale consistency regularization scheme, including a Top-K multi-positive contrastive objective and a Geometry-Aware Consistency Objective derived from a Lagrangian-constrained free-energy minimization. Extensive experiments show that ExpAlign consistently improves open-vocabulary detection and zero-shot instance segmentation, particularly on long-tail categories. Most notably, it achieves 36.2 AP$_r$ on the LVIS minival split, outperforming other state-of-the-art methods at comparable model scale, while remaining lightweight and inference-efficient.
\end{abstract}

\section{Introduction}
\label{sec:intro}

Large vision-language models (VLMs) enable powerful zero-shot transfer by aligning images and texts in a shared embedding space~\citep{radford2021learning,li2023blip,jia2021scaling}. 
Despite significant progress, precise spatial grounding, which involves localizing free-form textual concepts within images, remains a key challenge in dense prediction tasks such as open-vocabulary detection and segmentation~\citep{kamath2021mdetr,cai2022x}.
Recent open-vocabulary methods~\citep{liu2024grounding,cheng2024yolo,wang2025yoloe,fu2025wedetect} alleviate vocabulary constraints but often struggle with complex linguistic phenomena, including negation, relations, and compositional descriptions, when fine-grained localization is required.

Recent theoretical analysis reveals an inherent limitation of CLIP-style joint embeddings: collapsing a prompt into a single global representation cannot simultaneously encode attribute binding, spatial relations, and negation under cosine similarity~\citep{kang2025clip}. 
This geometric bottleneck motivates \emph{token-level} vision-language alignment, where informative tokens are selectively emphasized rather than uniformly aggregated. 
However, incorporating token-level reasoning into dense grounding remains nontrivial due to weak supervision and optimization instability.

We propose \textbf{ExpAlign}, an expectation-guided vision-language alignment framework for open-vocabulary grounding.
At its core is the \textbf{Expectation Alignment Head (EAH)}, which produces prompt-conditioned spatial alignment maps by aggregating token-wise similarities through a soft expectation mechanism.
By treating spatial locations as latent instances and textual tokens as competing hypotheses, EAH performs implicit token selection \textit{without instance-level annotations}, admitting a natural interpretation as attention-based soft pooling in multiple instance learning (MIL)~\citep{ilse2018attention}.

To further improve discriminability and spatial coherence, we introduce two auxiliary objectives.
A \textbf{multi-positive InfoNCE loss} enforces prompt-level semantic separation under weak supervision, while a \textbf{Geometry-Aware Consistency Objective (GACO)} regularizes alignment maps by emphasizing relatively consistent regions within each ground-truth mask.
Together, they stabilize optimization and support both positive and negative prompts.

Experiments on LVIS~\citep{gupta2019lvis}, ODinW~\citep{li2022grounded}, and RefCOCO/+/g~\citep{yu2016modeling} show that ExpAlign delivers strong open-vocabulary detection and segmentation performance under similar pre-training scale and model capacity to recent baselines. It achieves competitive or superior results on LVIS rare categories and ODinW subsets, while on RefCOCO/+/g it outperforms detection-focused methods such as YOLOE but trails specialized grounding models like Grounding DINO-T due to CLIP's limited spatial reasoning.

\section{Related Work}

\paragraph{Sentence-level vision-language Alignment.}
vision-language pretraining methods such as CLIP~\citep{radford2021learning} and BLIP~\citep{li2023blip} align whole images with global text embeddings using contrastive objectives. These sentence-level alignment techniques have enabled strong zero-shot transfer for retrieval and classification, and have been adapted to open-vocabulary detection by using prompt embeddings as classifier proxies~\citep{zhou2022detecting}. However, collapsing a prompt into a single vector can lose internal structure and limits fine-grained localization, motivating methods that exploit richer textual and visual interactions.

\paragraph{Token-level and Phrase-level Alignment.}
To capture fine-grained semantics between language and vision, several works explicitly model interactions at the token or phrase level. For instance, GLIP and its variants unify localization and grounding by introducing region-word contrastive alignment and phrase grounding objectives that align phrases with corresponding image regions~\citep{zhang2022glipv2}, enabling the model to learn region–token correspondences beyond global text embeddings. Methods like X-VLM perform multi-grained vision–language pretraining that aligns text with visual concepts at varying granularities, leveraging patch-level or concept-level representations~\citep{zeng2021multi}. Works in temporal grounding also observe that treating all tokens uniformly under cross-modal attention fails to exploit word-level signals crucial for fine-grained alignment~\citep{kang2025empower}.  
Unlike these approaches that rely on explicit cross-attention structures or phrase annotations, ExpAlign uses expectation-based aggregation to softly pool token similarities into spatial alignment maps, preserving token discriminability under weak supervision.

\paragraph{Alignment Regularization and Objectives.}
Contrastive learning remains a core tool for vision–language alignment, with InfoNCE-style objectives widely adopted for separating positive and negative pairs in multimodal settings~\citep{oord2018representation,li2023blip}. Region-level contrastive losses have been proposed to improve localization quality~\citep{zhang2022glipv2, zhong2022regionclip}, and dense alignment objectives have been incorporated into grounding frameworks to better capture spatial semantics. Our multi-positive InfoNCE adapts these ideas to multi-prompt supervision, focusing on the most informative regions. In addition, geometry-aware regularization has been explored in segmentation and structured prediction~\citep{liang2021exploring}, but existing approaches typically rely on absolute geometric cues. In contrast, our geometry-aware consistency objective operates on relative instance statistics, encouraging coherent alignment without rigid spatial targets.

\paragraph{RL-Inspired Regularization in VLMs}
Several works leverage reinforcement learning (RL)-inspired techniques or loss functions as regularizers to improve robustness and generalization in vision-language models. For instance, Group Relative Policy Optimization (GRPO) \citep{shao2024deepseekmath} introduces an efficient PPO variant that computes advantages via group-relative ranking, inspiring advantage-weighted alignment mechanisms. VARP \citep{singh2025varp} uses agent-regularized preferences in RL from VLM feedback to better align rewards and mitigate inaccuracies. PRLL \citep{zhengprll} applies LLM-assisted policy regularization for reward shaping, enabling adaptation in unfamiliar environments. In VLM fine-tuning, RL4VLM \citep{zhai2024fine} directly optimizes VLMs with regularization for decision-making, while VLM-RL \citep{huang2025vlm} incorporates contrastive language goals as regularized rewards in autonomous driving. These works illustrate the increasing adoption of RL-based regularization to enforce consistency and reduce overfitting in multimodal settings.

ExpAlign advances vision–language grounding by combining soft token-level aggregation with principled regularization, balancing expressiveness and optimization stability. It situates itself between sentence-level and structured alignment methods by enabling fine-grained, supervision-efficient alignment without reliance on heavy cross-attention or explicit token annotations.

\begin{figure*}[t]
    \centering
    \includegraphics[width=\linewidth]{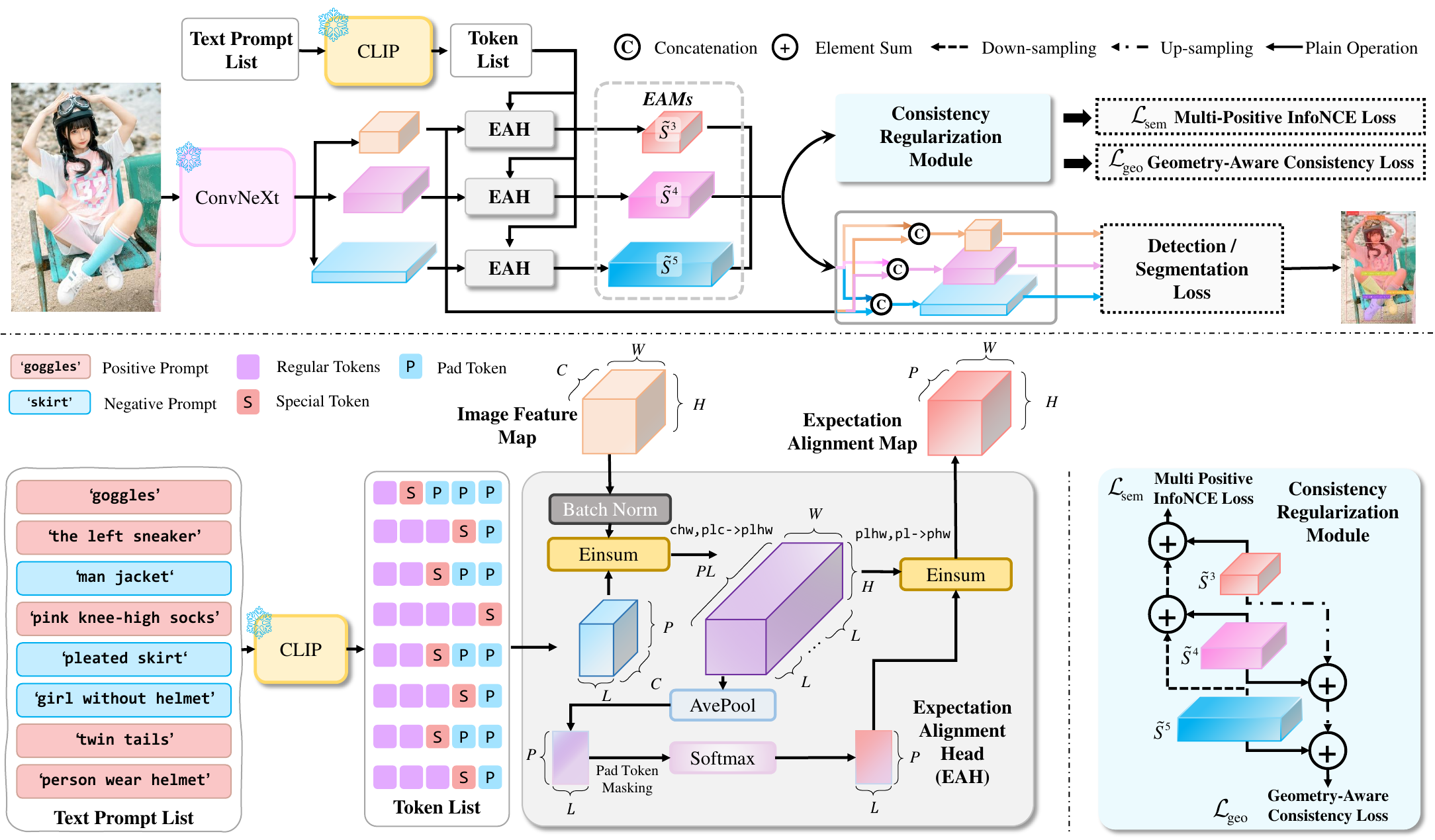}
    \caption{Overview of the proposed ExpAlign framework. \textbf{Top}: the overall pipeline, where prompt-conditioned Expectation Alignment Maps (EAMs) are computed at multiple feature scales and injected into visual features for open-vocabulary grounding and segmentation. \textbf{Bottom-left}: the Expectation Alignment Head, which aggregates token-level vision-language similarities into prompt-specific spatial alignment maps via expectation-based token weighting. \textbf{Bottom-right}: the Consistency Regularization Module, which applies semantic and geometric constraints to regularize the alignment maps. Best viewed in color.}
    \label{fig:overall}
\end{figure*}

\section{Method}

\subsection{Overview}
\label{sec:overview}

We study open-vocabulary grounding, where a model aligns visual regions with flexible language prompts and produces region-level predictions for detection or segmentation. Given an image $I$ and a set of textual prompts $\{T_k\}_{k=1}^{K}$, our goal is to compute prompt-conditioned spatial alignment maps that support robust localization under ambiguous and weak supervision.

 We propose \textbf{ExpAlign}, an expectation-guided vision-language alignment framework. As illustrated in Fig.~\ref{fig:overall}, ExpAlign consists of three key components: (i) an \emph{Expectation Alignment Head (EAH)} that performs soft prompt--region alignment via token-level expectation, producing an \emph{Expectation Alignment Map (EAM)}; (ii) a \emph{Consistency Regularization Module} that enforces cross-scale coherence of alignment maps; and (iii) auxiliary objectives that impose semantic and geometric constraints on the alignment distribution.

This design enables implicit instance selection over both prompt tokens and spatial locations, while remaining fully differentiable and compatible with standard detection and segmentation heads.

\subsection{Expectation Alignment Head}
\label{sec:eah}
For scale $s \in\{3,4,5\}$ the backbone produces a feature map $\mathbf{F}_b^s \in \mathbb{R}^{C\times H_s\times W_s}$, and each prompt $p\in \{1,\cdots, P\}$ (in image $b \in \{1,\cdots, B\}$) is represented by $L$ token embeddings $\mathbf{T}_{b,p} \in \mathbb{R}^{L\times C}$.

We compute the token-wise similarity at every spatial location $(x,y)$:
\begin{equation}
S_{b,p}^s(x,y,l) = \langle \mathbf{F}_b^s(x,y), \, \mathbf{T}_{b,p}(l) \rangle ,
\, l=1,\dots,L.
\end{equation}
To estimate the global relevance of each token, we aggregate spatial evidence by average pooling:
\begin{equation}
\bar S_{b,p}^s(l)
\;=\;
\frac{1}{H_s W_s}
\sum_{x=1}^{H_s}\sum_{y=1}^{W_s}
S_{b,p}^s(x,y,l).
\end{equation}
We then form a token posterior distribution via a softmax over non-pad tokens:
\begin{equation}
\pi_{b,p}^s(l)
\;=\;
\frac{\exp\!\big(\bar S_{b,p}^s(l) / \tau_t \big)}
{\sum_{l'} \exp\!\big(\bar S_{b,p}^s(l') / \tau_t \big)},
\end{equation}
where $\tau_t$ is a temperature parameter.

Finally, we compute the expectation alignment map (EAM) by marginalizing over tokens:
\begin{equation}
\label{eq:EAM}
\tilde S_{b,p}^s(x,y)
\;=\;
\sum_{l=1}^{L}
\pi_{b,p}^s(l)\,
S_{b,p}^s(x,y,l).
\end{equation}

The resulting map $\tilde S_{b,p}^s \in \mathbb{R}^{H_s \times W_s}$ is referred to as the
\emph{Expectation Alignment Map (EAM)} at scale $s$.
This formulation performs implicit token selection by assigning higher weights to globally informative tokens,
while suppressing noisy or irrelevant ones, and yields a spatial alignment score suitable for downstream grounding.

\subsection{Consistency Regularization Module}
\label{sec:consistency}

Given the expectation alignment maps (EAMs) produced at multiple feature scales,
we impose consistency constraints to stabilize vision-language alignment across scales
while respecting their distinct computational and geometric properties.
Specifically, EAMs from different scales are used in a scale-aware manner:
low-resolution maps are employed for efficient semantic alignment aggregation,
whereas high-resolution maps are preserved for geometry-sensitive consistency regularization.


\subsubsection*{Semantic Constraint}
To aggregate semantic evidence across scales while maintaining efficiency,
we first unify multi-scale expectation alignment maps (EAMs) at the coarsest resolution.
Specifically, EAMs from all feature levels are progressively downsampled to the spatial
resolution of the smallest scale (e.g., P5) and summed to obtain a unified alignment map:
\begin{equation}
\tilde S_{b,p}^{\mathrm{dw}} = \left(\operatorname{Down}\left((\operatorname{Down}(\tilde S_{b,p}^3) + \tilde S_{b,p}^4)/2\right) + \tilde S_{b,p}^5\right)/2.
\end{equation}
where $\operatorname{Down}(\cdot)$ denotes resolution-aligned downsampling.

For each image $b$ and prompt $p$, we select the top-$1\%$ highest responses
from the unified map:
\begin{equation}
\mathcal{I}_{b,p}
\;=\;
\operatorname{TopK}\!\left(
\tilde S_{b,p}^{\mathrm{dw}},
\; H_3W_3/100
\right).
\end{equation}
We then define the pooled prompt-level logit as the average alignment score over these
selected locations:
\begin{equation}
\ell_{b,p}
\;=\;
\frac{1}{|\mathcal{I}_{b,p}|}
\sum_{i \in \mathcal{I}_{b,p}}
\tilde S_{b,p}^{\mathrm{dw}}(i).
\end{equation}

Let $\mathcal{P}_b$ denote the set of positive prompts for image $b$.
The multi-positive InfoNCE objective with temperature $\tau$ is formulated as
\begin{equation}
\label{eq:Lcat}
\mathcal{L}_{\text{sem}}
=
-\frac{1}{B}
\sum_{b=1}^{B}
\sum_{p \in \mathcal{P}_b}
\frac{1}{|\mathcal{P}_b|}
\log
\frac{\exp(\ell_{b,p} / \tau)}
{\sum_{p'=1}^{P} \exp(\ell_{b,p'} / \tau)} .
\end{equation}

\subsubsection*{Geometry Constraint}

We introduce a \emph{Geometry-Aware Consistency Objective (GACO)} to regularize the spatial structure of the energy field produced by the Consistency Regularization Module.
Instead of enforcing absolute geometric targets, GACO shapes the energy landscape through relative, instance-wise consistency within the ground-truth region.

we construct a unified high-resolution
Expectation Alignment Map by progressively aggregating EAMs from coarse to fine levels.
Starting from the coarsest scale (P5), we apply a top-down fusion strategy:
\begin{equation}
    \tilde S_{b,p}^{\mathrm{up}} = \left(\operatorname{Up}\left((\operatorname{Up}(S_{b,p}^5) +  S_{b,p}^4)/2\right) + S_{b,p}^3\right)/2.
\end{equation}
where $\operatorname{Up}(\cdot)$ denotes resolution-aligned upsampling. This aggregation preserves fine-grained geometry while incorporating multi-scale semantic evidence.

Given the aggregated alignment map, we define a normalized distribution over
prompt--patch pairs by applying a softmax over all prompts and spatial locations:
\begin{equation}
\mathbb{P}_b(p,i)
=
\frac{\exp\!\big(\tilde S_{b,p}^{\mathrm{up}}(i)\big)}
{\sum\limits_{p'=1}^{P} \sum\limits_{i' \in \Omega}
\exp\!\big(\tilde S_{b,p'}^{\mathrm{up}}(i')\big)},
\end{equation}
where $i$ indexes spatial locations at the P3 resolution. This distribution assigns higher probability mass to patches that are more strongly aligned with a given prompt.

Let $M_{b,p}(i)\in\{0,1\}$ denote the binary ground-truth mask associated with prompt $p$,
resized to the P3 resolution, and define the positive region
$\mathcal{M}_{b,p}=\{\, i \mid M_{b,p}(i)=1 \,\}$.
We introduce a bounded local alignment confidence
\begin{equation}
R_b(p,i) = \sigma\!\big(\tilde S_{b,p}^{\mathrm{up}}(i)\big),
\end{equation}
and compute its mean and standard deviation over the positive region:
\begin{equation*}
\begin{aligned}
\mu_{b,p} &=
\frac{1}{|\mathcal{M}_{b,p}|}
\sum_{i\in\mathcal{M}_{b,p}} R_b(p,i),\\
\sigma_{b,p} &=
\sqrt{
\frac{1}{|\mathcal{M}_{b,p}|}
\sum_{i\in\mathcal{M}_{b,p}}
\big(R_b(p,i)-\mu_{b,p}\big)^2
+ \varepsilon }.
\end{aligned}
\end{equation*}

Based on these statistics, we define a \emph{relative consistency score}
\begin{equation}
A_{b,p}(i)
=
\operatorname{clip}\!\left(
\frac{R_b(p,i)-\mu_{b,p}}{\sigma_{b,p}},
\,-c,\,c
\right),
\end{equation}
which measures how well each patch agrees with the dominant spatial structure of its corresponding ground-truth instance. Importantly, this score depends only on intra-instance relative statistics and does not impose absolute alignment targets.

The final geometry-aware consistency loss is defined as
\begin{equation}
\label{eq:Lgeo}
\!\!\mathcal{L}_{\text{geo}}
\!=\!
\!-\!
\frac{1}{\!\!\sum_b\!\!\sum_p |\mathcal{M}_{b,p}|}
\!\!\sum_{b=1}^{B}
\!\sum_{p=1}^{P}
\!\sum_{i\in\mathcal{M}_{b,p}}
\!\!\!A_{b,p}(i)\,
\!\log\!\mathbb{P}_b(p,i).
\end{equation}

This objective redistributes probability mass within each ground-truth region according to relative geometric consistency, encouraging spatially coherent alignment maps while remaining invariant to monotonic transformations of alignment scores. Rather than collapsing responses via pointwise regression, GACO sculpts the geometry of alignment maps and naturally supports implicit instance selection.

Notably, under the Gibbs energy minimization framework with the three assumptions detailed in Appendix~\ref{app:variational}, the two loss terms in this module are mathematically derived as principled regularizers for cross-modal alignment. The full variational derivation is provided in Appendix.

\subsection{Full Training Objective}
The proposed consistency regularization serves as auxiliary supervision during training, complementing the standard detection or segmentation objective. Specifically, the semantic consistency loss $\mathcal{L}_{\text{sem}}$ promotes instance-level prompt--patch alignment, while the geometry-aware consistency loss $\mathcal{L}_{\text{geo}}$ encourages spatially coherent responses within each ground-truth region. Let $\mathcal{L}_{\text{det/seg}}$ denote the task-specific loss (including classification, regression, and mask prediction terms when applicable). The overall training objective is
\[
\mathcal{L}
=
\mathcal{L}_{\text{det/seg}}
+
\lambda_{\text{sem}}\,\mathcal{L}_{\text{sem}}
+
\lambda_{\text{geo}}\,\mathcal{L}_{\text{geo}}.
\]
Both consistency losses are used only during training and are discarded at inference time, preserving standard prediction behavior while improving vision-language alignment and spatial consistency.

\subsection{Connection to Multiple Instance Learning}
\label{sec:mil}

Our Expectation Alignment Map (EAM) is mathematically equivalent to attention-based soft pooling in multiple instance learning \citep{ilse2018attention}. Concretely, by treating each spatial location as an instance and each textual prompt as a bag, the EAM implements a soft-MIL pooling over token hypotheses, which grants permutation invariance and expressive pooling power. The multi-positive InfoNCE loss and the Geometry-Aware Consistency Objective (GACO) further enforce discriminative bag semantics and regularize instance relationships within positive regions. A formal proof is provided in Appendix~\ref{app:mil_supp}.

\begin{table*}[t]
\setlength{\abovecaptionskip}{3pt} 
\setlength{\belowcaptionskip}{3pt}
\renewcommand\arraystretch{1.07}
  \centering
  \caption{\textbf{Zero-shot detection performance.} Metrics on LVIS val~\cite{gupta2019lvis} and minival~\cite{kamath2021mdetr} are fixed AP~\cite{achal2022evaluating}. All models use an input resolution of 640×640, except for those with Swin-Tiny as the backbone, which employ 800×1333. For training data, OG indicates Objects365~\cite{shao2019objects365} and GoldG~\cite{kamath2021mdetr}. RefC indicates RefCOCO/g/+~\cite{yu2016modeling}.}
  \resizebox{\linewidth}{!}{
      \begin{tabular}{l|lcc|cccc|cccc|c|c}
        \hline
        \multirow{2}{*}{Method} & \multirow{2}{*}{Backbone} & \multirow{2}{*}{Pre-train Data} & \multirow{2}{*}{\#Params}  & \multicolumn{4}{c|}{LVIS$^{\text{minival}}$}  & \multicolumn{4}{c|}{LVIS}  & ODinW13 & ODinW35 \\
        & & & & AP & AP$_{r}$ & AP$_{c}$ & AP$_{f}$ & AP & AP$_{r}$ & AP$_{c}$ & AP$_{f}$ & AP & AP \\
        \hline
        GLIP-T~\cite{li2022grounded} & Swin-T & OG & 232M & 24.9 & 17.7 & 19.5 & 31.0 & 16.5 & 7.5 & 11.6 & 26.1 & - & - \\
        DetCLIP~\cite{yao2022detclip} & Swin-T & OG & -  & 34.4 & 26.9 & 33.9 & 36.3 & - & - & - & - & - & - \\
        GDINO-T~\cite{liu2024grounding} & Swin-T & OG, Cap4M & 172M & 27.4 & 18.1 & 23.3 & 32.7 & - & - & - & - & \textbf{49.7} & 22.3 \\
        OVLW-DETR-L~\cite{wang2024ovlw} & LW-DETR-L & OG & 47M  & 33.5 & 26.5 & 33.9 & 34.4 & - & -  & - & - & - & - \\
        OmDet-Turbo-B~\cite{zhao2024omdet} & ConvNeXt-B & OG & 175M  & 34.7 & - & - & - & - & -  & - & - & - & - \\
        YOLO-Worldv2-L~\cite{cheng2024yolo} & YOLOv8-L & OG & 48M & 35.4 & 27.6 & 34.1 & 38.0  & 26.8  & 19.8 & 23.6 & 33.4   & 38.4 & 17.1 \\
        GDINO 1.5 Edge~\cite{ren2024grounding} & EfficientViT-L1 & Grounding-20M & -  & 33.5 & 28.0 & 34.3 & 33.9 & 27.3 & 26.3 & 25.7 & 29.6 & - & - \\
        YOLOE-8-L~\cite{wang2025yoloe} & YOLOv8-L & OG & 45M  & 35.9 & 33.2 & 34.8 & 37.3 & - & -  & - & - & - & - \\
        \hline
        ExpAlign (Ours) & ConvNeXt-T & OG & 60M & \textbf{37.2} & 35.8 & \textbf{37.2} & \textbf{37.6} & \textbf{30.3} & \textbf{26.5} & \textbf{29.8} & \textbf{33.7} & 48.0 & \textbf{22.6} \\
        ExpAlign (Ours) & ConvNeXt-T & OG,RefC & 60M & 37.1 & \textbf{36.2} & 37.1 & 37.4 & 29.5 & 24.8 & 28.0 & 33.4 & 47.7 & 22.4 \\
        \hline
      \end{tabular}
    }
  \label{tab:lvis_result}
\end{table*}

\section{Experiment}
\subsection{Implementation Details}

\textbf{Model.}
ExpAlign is implemented as a lightweight vision-language alignment module that can be seamlessly integrated into standard multi-scale detection and segmentation architectures.
Unless otherwise specified, we adopt a frozen DINOv3~\cite{simeoni2025dinov3} ConvNeXt-T image encoder as the visual backbone.
Following the encoder, we employ the same YOLOv8~\cite{varghese2024yolov8} FPN-style feature enhancement module to produce multi-scale feature maps.
The detection and segmentation heads, along with their corresponding loss functions, strictly follow the standard YOLOv8 formulation without modification.

Text prompts are encoded using a frozen CLIP~\cite{radford2021learning} ViT-L/14 text encoder, where we retain all token-level representations before the end-of-text (EOT) token, rather than collapsing the prompt into a single global embedding.
To map textual tokens into the same feature space as visual representations, we append a lightweight Residual SwiGLU feed-forward network (SwiGLUFFN)~\cite{shazeer2020glu} after the CLIP text encoder.
The second linear layer of the SwiGLUFFN is initialized to zero, such that the module initially behaves as an identity mapping.
This design stabilizes early training and ensures that token-level alignment is learned progressively without disrupting the pretrained CLIP geometry.

The Expectation Alignment Head (EAH) is attached to each feature level and computes prompt-conditioned alignment maps via token-wise similarity aggregation.
The Consistency Regularization Module operates solely on the resulting alignment maps and introduces no additional learnable parameters.

\textbf{Data.}
We adopt the same data protocol as~\citet{cheng2024yolo} and train ExpAlign on a combination of detection and grounding datasets.
Specifically, we use Objects365~\cite{shao2019objects365} for large-scale object detection and GoldG~\cite{kamath2021mdetr}, which aggregates GQA~\cite{hudson2019gqa} and Flickr30k~\cite{plummer2015flickr30k}, for vision-language grounding.
To avoid data leakage, all images overlapping with COCO~\cite{lin2014microsoft} are excluded from the training set. Since pixel-level annotations are not available for most training images, we generate pseudo instance masks for segmentation by applying the SAM-2.1 model~\cite{ravi2024sam} to ground-truth bounding boxes from the detection and grounding datasets.

\textbf{Training.}
All experiments employed the AdamW optimizer combined with a cosine learning rate scheduler across a two-stage training procedure. Both training and evaluation were carried out on a dedicated machine featuring eight NVIDIA RTX Pro 6000 GPUs, each with 96 GB of memory.
Both the image encoder and the text encoder remain frozen throughout training.
In the first stage, the model is trained for 30 epochs using only the standard YOLOv8 detection and segmentation losses, with an initial learning rate $\text{lr}_0=0.002$, final learning rate ratio $\text{lrf}=0.1$, and a warmup of 3 epochs.
In the second stage, we enable the multi-positive InfoNCE loss and the Geometry-Aware Consistency Objective (GACO), and continue training for another 20 epochs with a reduced initial learning rate $\text{lr}_0=0.001$, $\text{lrf}=0.2$, and no warmup.
The loss weights are fixed to $\lambda_{\text{sem}}=0.5$ and $\lambda_{\text{geo}}=1.0$ across all experiments.
The semantic contrastive loss is applied at the lowest feature resolution for efficiency, while GACO is computed on high-resolution alignment maps to preserve spatial structure.

\subsection{Zero-shot Detection and Segmentation Performance}

ExpAlign exhibits competitive zero-shot open-vocabulary detection performance under fair pre-training and inference conditions. We perform zero-shot evaluation on the val and minival splits of LVIS with fixed AP protocol. LVIS features 1203 classes with a long-tail distribution, while ODinW spans 35 diverse real-world datasets, testing generalization to varied domains and vocabularies.

As shown in Table \ref{tab:lvis_result}, ExpAlign with OG and RefC achieves 37.1 AP on LVIS minival and leads in rare-category performance with AP$_r$ of 36.2. On full LVIS val, it reaches 29.5 AP and 24.8 AP$_r$, benefiting from RefCOCO referring expression supervision for improved long-tail handling. On ODinW, it attains 47.7 AP on ODinW13 and 22.4 AP on ODinW35, substantially outperforming GLIP-T and closely matching or exceeding Grounding DINO-T, using a lightweight design with only 60M total parameters (26M frozen), which enables superior efficiency under comparable backbone scale.

All models use 640×640 input resolution except those with Swin-Tiny backbone, which use 800×1333. These results demonstrate the effectiveness of referring expression data for enhancing rare-object detection and real-world robustness without relying on massive extra pre-training data.

\begin{table}[t]
\centering
\caption{\textbf{Zero-shot instance segmentation performance} on LVIS \texttt{val} set using standard mask AP$^m$. ExpAlign  and YOLOE are evaluated purely zero-shot without any LVIS images or annotations during training. In contrast, YOLO-Worldv2-L is fine-tuned on LVIS-Base data for the segmentation head.}
\label{tab:lvis-segm}
\small
\setlength{\tabcolsep}{2.5pt}
\resizebox{\linewidth}{!}{%
\begin{tabular}{p{4.2cm}<{\raggedright}|p{1cm}<{\centering}p{1cm}<{\centering}p{1cm}<{\centering}p{1cm}<{\centering}} 
\toprule
Model  & AP$^m$ & AP$_r^m$ & AP$_c^m$ & AP$_f^m$ \\ 
\midrule
YOLO-Worldv2-L  & 19.8 & 17.2 & 17.5 & 23.6 \\
OpenSeeD~\cite{zhang2023simple}  & 21.0 & - & - & - \\
YOLOE-v8-L  & 23.5  & 21.9  & 21.6  & 26.4  \\
YOLOE-11-L  & 22.6  & 19.3 & 20.9  & 26.0  \\
\midrule
ExpAlign    &  \textbf{29.9}  & 29.0 & \textbf{30.9}  & \textbf{29.1}  \\
ExpAlign ( + RefC) & 29.8  & \textbf{29.7} & 30.8  & 28.9  \\
\bottomrule
\end{tabular}
}
\end{table}

Furthermore, as shown in \cref{tab:lvis-segm}, ExpAlign achieves strong zero-shot instance segmentation on the LVIS \texttt{val} set using the standard AP$^m$ metric. It attains 29.9 AP$^m$ overall and 29.0 AP$^m_r$ on rare categories without any exposure to LVIS images during training. This performance far surpasses YOLO-Worldv2-L fine-tuned on LVIS-Base at 19.8 AP$^m$ and YOLOE variants ranging from 22.6 to 23.5 AP$^m$. The substantial improvement of 6 to 10 AP$^m$ is largely attributed to the GACO regularization term introduced during pre-training, which significantly enhances mask precision and boundary alignment across long-tail categories in open-vocabulary settings.

\subsection{Downstream Transferring}
We evaluate ExpAlign's downstream transferability on the COCO dataset through fine-tuning for object detection and instance segmentation, as shown in Table~\ref{tab:coco-tf}. Under linear probing (backbone frozen, 10 epochs), ExpAlign outperforms YOLOE-v8-L and YOLOE-11-L in both bounding-box AP$^b$ and mask AP$^m$. In full tuning (80 epochs, all parameters trainable), ExpAlign further surpasses these baselines across most metrics, including AP$^b_{50}$, AP$^m_{50}$, and AP$^m_{75}$. These consistent gains across both strategies highlight that ExpAlign's pre-training design enables more efficient and effective adaptation to standard supervised tasks compared to recent open-vocabulary baselines.

\begin{table}[!t]
\centering
\caption{\textbf{Downstream fine-tuning performance on COCO.} ExpAlign is fine-tuned on the COCO train2017 set and evaluated on val2017 using standard bounding-box AP$^b$ and mask AP$^m$ metrics, including AP at IoU thresholds 0.50 and 0.75. We compare two practical strategies: linear probing with the backbone frozen for 10 epochs and full tuning with all parameters trainable for 80 epochs. Training-from-scratch baselines are included for reference.}
\label{tab:coco-tf}
\small
\renewcommand\arraystretch{1.1}
\setlength{\tabcolsep}{3pt}
\resizebox{\linewidth}{!}{%
\begin{tabular}{p{2.1cm}<{\raggedright}|p{0.8cm}<{\centering}p{0.8cm}<{\centering}p{0.8cm}<{\centering}p{0.8cm}<{\centering}p{0.8cm}<{\centering}p{0.8cm}<{\centering}p{0.8cm}<{\centering}} 
\toprule
Model & Epochs & AP$^b$ & AP$^b_{50}$ & AP$^b_{75}$ & AP$^m$ & AP$^m_{50}$ & AP$^m_{75}$ \\ 
\hline
\multicolumn{8}{l}{\textit{Training from scratch}} \\
YOLOv8-L & 300 & 52.4 & 69.3 & 57.2 & 42.3 & 66.0 & 44.9  \\
YOLO11-L & 600 & 53.3 & 70.1 & \textbf{58.2} & 42.8 & 66.8 & \textbf{45.5}  \\ 
\hline
\multicolumn{8}{l}{\textit{Linear probing}} \\
YOLOE-v8-L & 10 & 45.4 & 63.3 & 50.0 & 38.3 & 59.6 & 40.8 \\
YOLOE-11-L & 10 & 45.1 & 62.8 & 49.5 & 38.0 & 59.2 & 40.6 \\ 
ExpAlign (Ours)   & 10 & 47.2 & 65.3 & 51.7 & 39.2 & 61.7 & 41.8 \\ 
\hline
\multicolumn{8}{l}{\textit{Full tuning}} \\
YOLOE-v8-L & 80 & 53.0 & 69.8 & 57.9 & 42.7 & 66.5 & 45.6 \\
YOLOE-11-L & 80 & 52.6 & 69.7 & 57.5 & 42.4 & 66.2 & 45.2 \\
ExpAlign (Ours) & 80 & \textbf{53.5} & \textbf{70.7} & 57.8 & \textbf{42.9} & \textbf{67.0} & \textbf{45.5} \\ 
\bottomrule
    \end{tabular}}
    \vspace{-0.3cm}
\end{table}

\subsection{Referring Expression Comprehension Performance}
As shown in Table~\ref{tab:ref}, ExpAlign underperforms significantly on referring expression comprehension tasks compared to Grounding DINO-T, even when pre-trained with the same RefC data. For example, ExpAlign with RefC achieves only 51.6/59.3/47.7 on RefCOCO splits and 65.6/64.0 on RefCOCOg, far below Grounding DINO-T's 74.0/74.9/59.3 and 71.1/72.1. We acknowledge this limitation openly. The primary cause is likely the CLIP text encoder's inherent weakness in understanding positional and relational language (e.g., ``left of'', ``behind'', ``next to''), which is crucial for many referring expressions, especially on RefCOCO+ and RefCOCOg. In contrast, Grounding DINO benefits from a more specialized text encoder and fusion mechanism better suited for spatial reasoning. This highlights a key area for future improvement in ExpAlign's design.

\begin{table}[!t]
\centering
\renewcommand\arraystretch{1.15}
  \caption{\textbf{Performance on common referring expression comprehension} datasets. The evaluation metric for RefCOCO, RefCOCO+, and  RefCOCOg is the Top-1 accuracy. * indicates removed mosaic, flip, and HSV augmentations in phase-2 training.}
  \resizebox{1\linewidth}{!}{
      \begin{tabular}{l|c|cccccccc}
        \hline
        \multirow{2}{*}{Method} & \multirow{2}{*}{Pre-Train Data}  & \multicolumn{3}{c}{RefCOCO} & \multicolumn{3}{c}{RefCOCO+} & \multicolumn{2}{c}{RefCOCOg}  \\
        & & val & testA & testB & val & testA & testB & val & test \\
        \hline
        YOLOE-8-L & OG & 32.8 & 43.7 & 45.7 & 28.5 & 31.1 & 45.8 & 22.6 & 23.1  \\
        GDINO-T & OG & 50.4 & 57.2 & 43.2 & 51.4 & 57.6 & 45.8 & 67.5 & 67.1  \\
        GDINO-T & OG, RefC & 74.0 & 74.9 & 59.3 & 66.8 & 69.9 & 56.1 & 71.1 & 72.1  \\
        ExpAlign & OG & 42.2 & 48.2 & 46.7 & 37.62 & 35.3 & 45.2 & 60.0 & 59.67  \\
        ExpAlign* & OG, RefC & 51.6 & 59.3 & 47.7 & 48.9 & 47.5 & 45.5 & 65.6 & 64.0  \\
        \hline
      \end{tabular}
    }
\label{tab:ref}
\end{table}
\subsection{Ablation Study}
We further provide extensive analyses for the effectiveness of designs in our ExpAlign. Experiments are conducted on fixed AP~\cite{achal2022evaluating} is reported on LVIS minival splits set for zero-shot evaluation, by default. 

As shown in Table~\ref{tab:ablation_token}, the token-level alignment strategy (EAH) significantly outperforms simpler representations. Compared to mean pooling and global pooled token (EOT), EAH improves 5.2 and 2.7 absolute AP points, respectively. More specifically, on rare categories, applying EAH reaches 36.2 AP$_r$, achieving the largest gain of 8.9 AP$_r$ points compared to mean pooling. This demonstrates that explicit alignment at the token level captures finer-grained cross-modal correspondence, leading to better generalization on long-tail distributions.

\begin{table}[t]
\centering
\caption{\textbf{Ablation study} on token-level alignment versus pooled token representations on the LVIS dataset.}
\label{tab:ablation_token}
\resizebox{1\linewidth}{!}{
\begin{tabular}{p{4.5cm}<{\raggedright}|p{1cm}<{\centering}p{1cm}<{\centering}p{1cm}<{\centering}p{1cm}<{\centering}}
\toprule
Alignment Strategy & AP & AP$_\text{r}$ & AP$_\text{c}$ & AP$_\text{f}$ \\
\midrule
Mean pooling over tokens  & 31.9 & 27.3 & 30.5 & 32.2 \\
Global pooled token (EOT) & 34.4 & 33.2 & 34.5 & 35.4 \\
Token-level alignment (EAH)  & \textbf{37.1} & \textbf{36.2} & \textbf{37.1} & \textbf{37.4} \\
\bottomrule
\end{tabular}
}
\end{table}

Table~\ref{tab:ablation_lambda} ablates the loss weights for semantic contrastive loss ($\lambda_{\text{sem}}$) and geometry-aware consistency ($\lambda_{\text{geo}}$). Using $\lambda_{\text{sem}}=0.5$ alone reaches 37.0 AP and strong rare performance (35.8 AP$_r$). Adding $\lambda_{\text{geo}}$ (especially at 0.5 or 1.0) consistently improves overall AP and frequent/common categories, with the best results at $\lambda_{\text{sem}}=0.5$ + $\lambda_{\text{geo}}=0.5$ (37.8 AP) or $\lambda_{\text{sem}}=0.0$ + $\lambda_{\text{geo}}=1.0$ (37.1 AP, 35.6 AP$_r$). Excessive $\lambda_{\text{sem}}$ tends to hurt rare-category performance when combined with high $\lambda_{\text{geo}}$, indicating a necessary balance. These results confirm that the geometry-aware term complements semantic alignment by enforcing spatial consistency across the long tail.

\begin{table}[t]
\renewcommand\arraystretch{0.95}
\centering
\caption{\textbf{Ablation study on the loss weights} $\lambda_{\text{sem}}$ (semantic contrastive loss) and $\lambda_{\text{geo}}$ (geometry-aware consistency objective) on the LVIS dataset.}
\label{tab:ablation_lambda}
\resizebox{1\linewidth}{!}{
\begin{tabular}{p{1.2cm}<{\centering}p{1.2cm}<{\centering}|p{1.2cm}<{\centering}p{1.2cm}<{\centering}p{1.2cm}<{\centering}p{1.2cm}<{\centering}}
\toprule
$\lambda_{\text{sem}}$ & $\lambda_{\text{geo}}$ 
& AP & AP$_\text{r}$ & AP$_\text{c}$ & AP$_\text{f}$ \\
\midrule
0.0 & 0.0 & 35.0 & 32.0 & 35.8 & 35.1 \\
0.5 & 0.0 & 37.0 & 35.8 & 35.9 & 38.1 \\
1.0 & 0.0 & 36.1 & 30.2 & 33.1 & 36.7 \\
\midrule
0.0 & 0.5 & 37.6 & 31.7 & \textbf{37.7} & \textbf{38.6} \\
0.5 & 0.5 & \textbf{37.8} & 32.1 & 37.6 & 38.3 \\
1.0 & 0.5 & 36.2 & 29.2 & 26.1 & 37.6 \\
\midrule
0.0 & 1.0 & 37.1 & 35.6 & 37.6 & 37.2 \\
0.5 & 1.0 & 37.2 & \textbf{35.9 }& 37.2 & 37.6 \\
1.0 & 1.0 & 36.9 & 32.3 & 36.3 & 37.7 \\
\midrule
0.0 & 1.5 & 36.6 & 34.5 & 36.2 & 37.4 \\
0.5 & 1.5 & 36.7 & 35.0 & 36.6 & 37.2 \\
1.0 & 1.5 & 35.1 & 31.9 & 35.4 & 36.0 \\
\bottomrule
\end{tabular}
}
\vspace{-0.2cm}
\end{table}

Table~\ref{tab:ablation_backbone} compares different backbones under identical detection and segmentation heads on LVIS$^{\text{minival}}$. The YOLOv8 backbone achieves 35.6 AP overall, with 33.9 AP$_r$ on rare categories. In contrast, using the DINOv3 backbone without freezing leads to training collapse (indicated by N/A), resulting in no meaningful convergence. However, when the DINOv3 backbone is frozen during pre-training, ExpAlign reaches 37.2 AP overall and 35.9 AP$_r$, outperforming YOLOv8 by 1.6 AP and showing particular gains on rare categories. This suggests that preserving the rich, high-quality pre-trained features from a strong frozen vision foundation model is crucial for ExpAlign's cross-modal alignment objectives, whereas a detection-oriented backbone like YOLOv8 or unfrozen DINOv3 hinders effective learning of the alignment signals.

\begin{figure*}[!htbp]
\setlength{\abovecaptionskip}{3pt} 
\setlength{\belowcaptionskip}{3pt}
    \centering
    \includegraphics[width=1\linewidth]{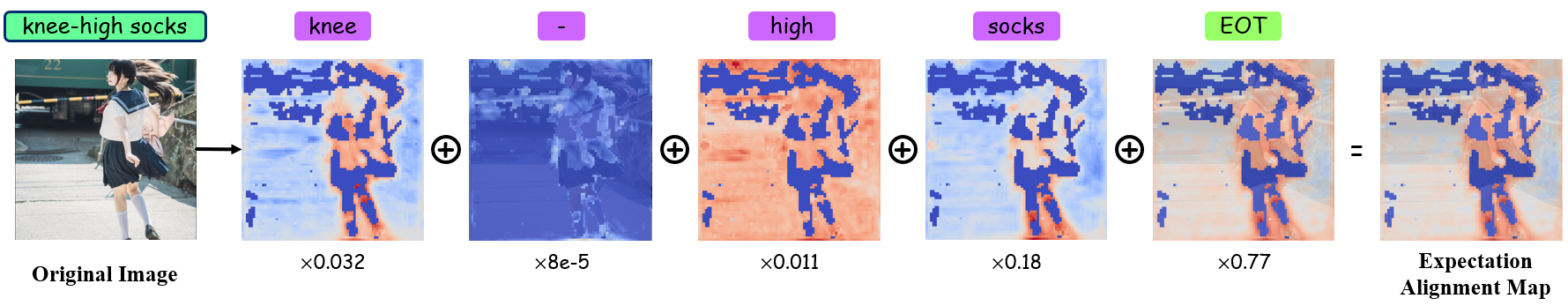}
    \caption{Expectation alignment map calculation diagram. Spatial alignment maps are first computed for individual text tokens. All maps are then  aggregated with their importance weight (displayed below each map) to form a prompt-conditioned expectation alignment map.}
    \label{fig:EAMvis}
\end{figure*}

\begin{figure*}[!htbp]
\setlength{\abovecaptionskip}{3pt} 
\setlength{\belowcaptionskip}{3pt}
    \centering
    \begin{subfigure}[t]{0.245\textwidth}
        \centering
        \includegraphics[width=\linewidth]{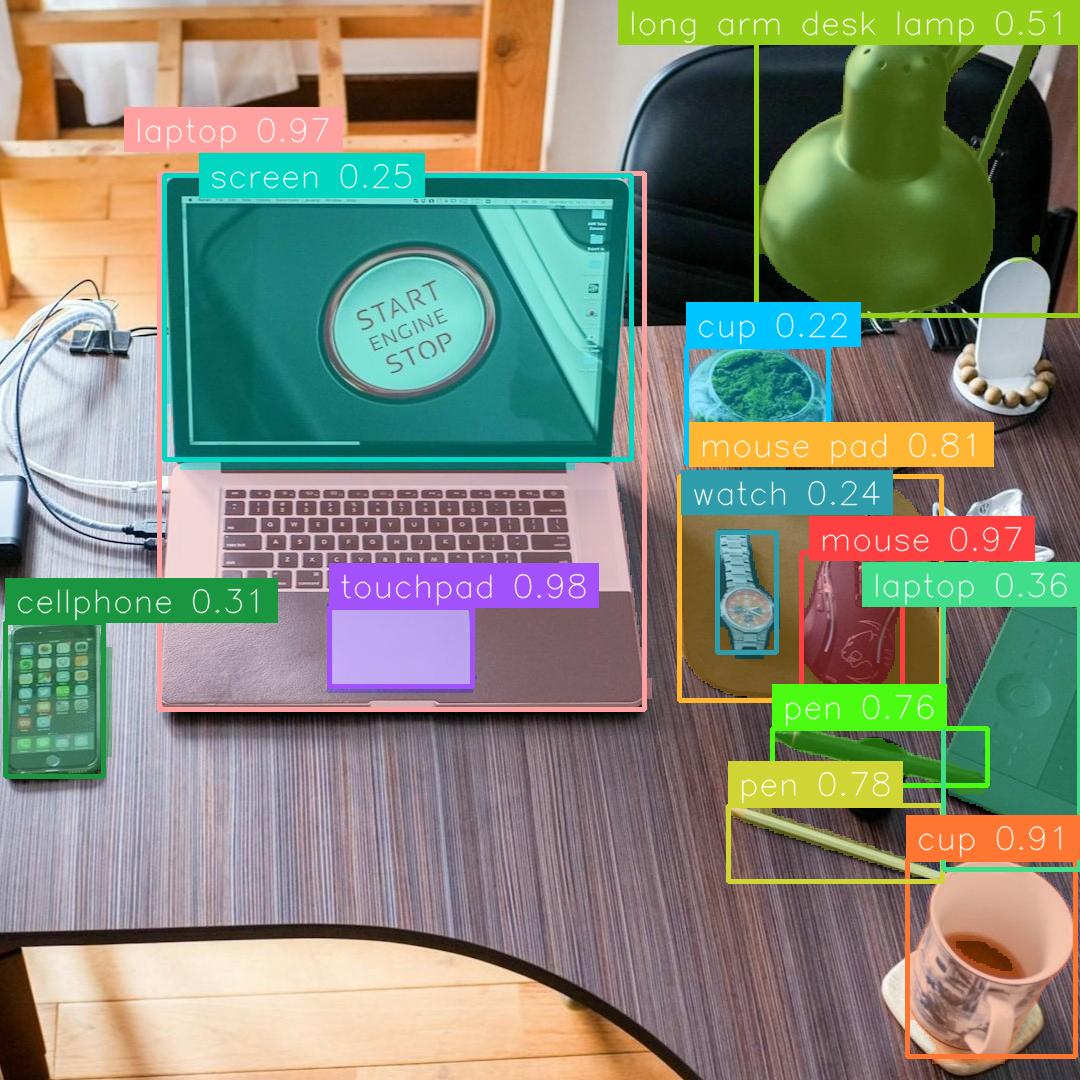}
        \caption{}
        \label{fig:visuala}
    \end{subfigure}\hfill
    \begin{subfigure}[t]{0.245\textwidth}
        \centering
        \includegraphics[width=\linewidth]{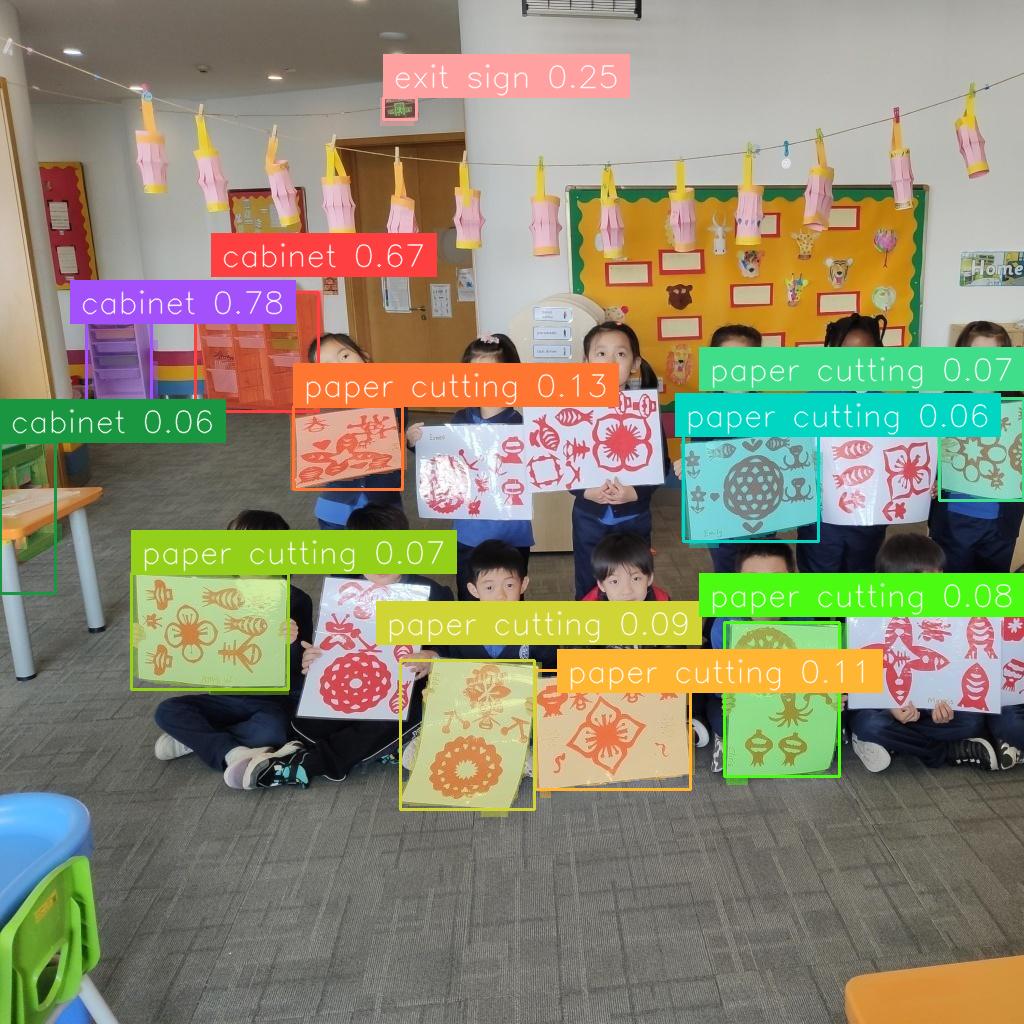}
        \caption{}
        \label{fig:visualb}
    \end{subfigure}\hfill
    \begin{subfigure}[t]{0.245\textwidth}
        \centering
        \includegraphics[width=\linewidth]{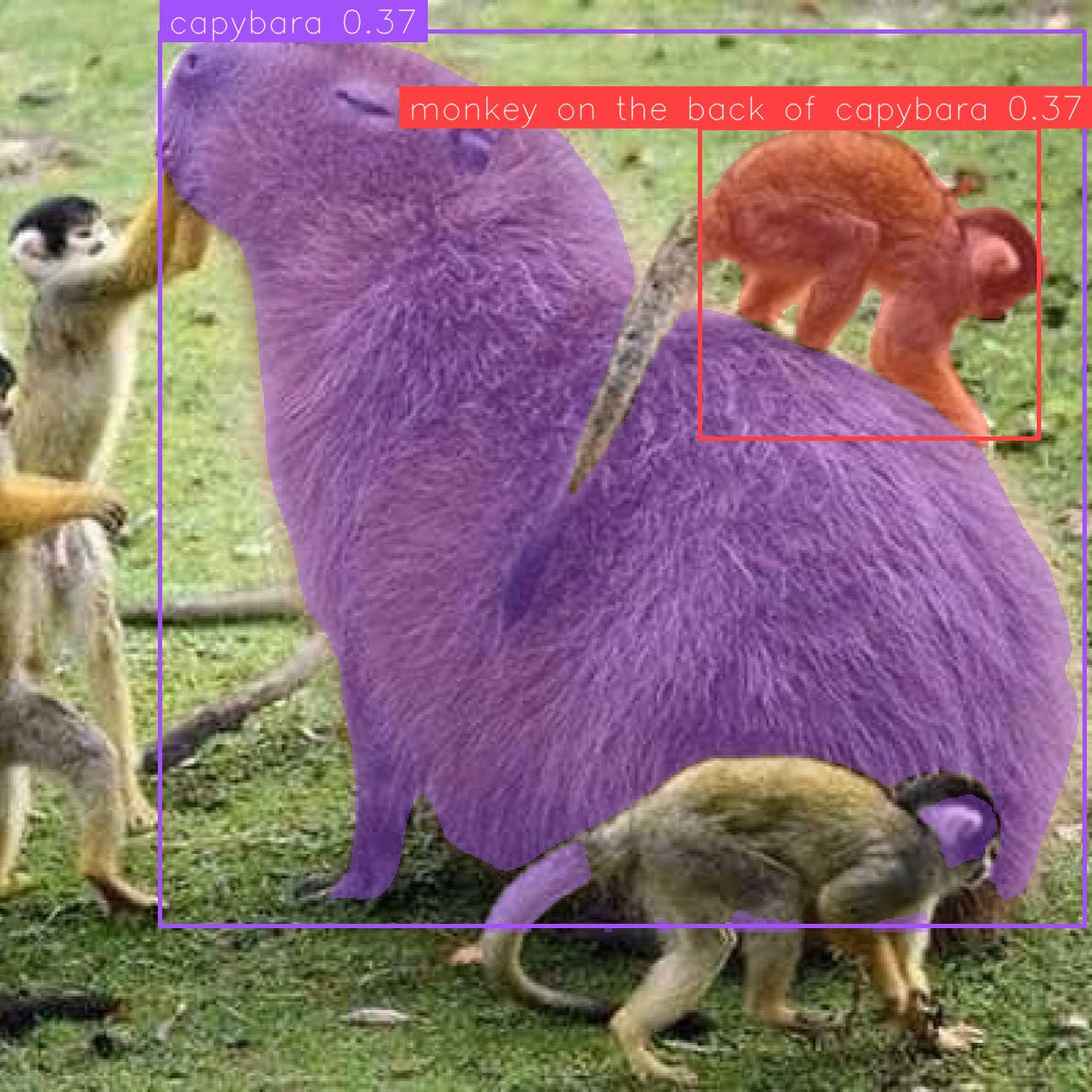}
        \caption{}
        \label{fig:visualc}
    \end{subfigure}\hfill
    \begin{subfigure}[t]{0.245\textwidth}
        \centering
        \includegraphics[width=\linewidth]{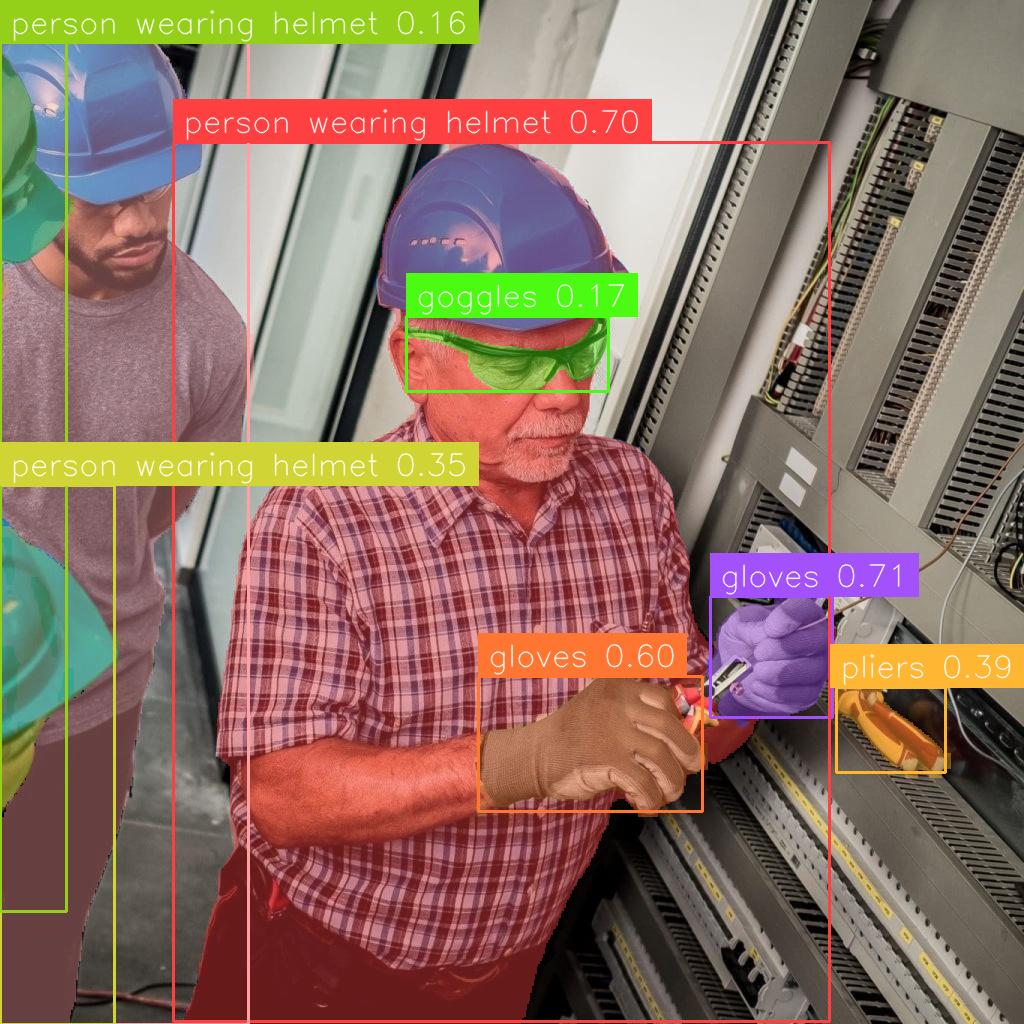}
        \caption{}
        \label{fig:visuald}
    \end{subfigure}
    \caption{Qualitative examples of detection and segmentation results. (a) prompts: laptop, cellphone, watch, cup, mouse, long arm desk lamp, pen, mouse pad, touchpad, screen, keyboard. (b) prompts: paper cutting, cabinet, exit sign. (c) prompts: capybara, monkey on the back of capybara. (d) prompts: person wearing helmet, pliers, gloves, goggles. Zoom in for better visual effect.}
    \label{fig:visual}
\end{figure*}

\begin{table}[t]
\centering
\caption{\textbf{Backbone comparison} on the LVIS dataset. All settings use the same detection and segmentation heads.}
\label{tab:ablation_backbone}
\resizebox{1\linewidth}{!}{
\begin{tabular}{p{3cm}<{\raggedright}|p{1cm}<{\centering}p{1cm}<{\centering}p{1cm}<{\centering}p{1cm}<{\centering}}
\toprule
Backbone & AP & AP$_\text{r}$ & AP$_\text{c}$ & AP$_\text{f}$ \\
\midrule
YOLOv8 & 35.6 & 33.9 & 35.8 & 37.3 \\
DINOv3 & N/A & N/A & N/A & N/A \\
DINOv3 (frozen) & 37.2 & 35.9 & 37.2 & 37.6\\
\bottomrule
\end{tabular}}
\end{table}

\section{Visualization}

Figure~\ref{fig:EAMvis} illustrates the intuition behind the proposed EAH.
Instead of collapsing a text prompt into a single global embedding, EAH preserves all token-level representations before the EOT token and computes a spatial alignment map for each token.
These token-wise maps are then combined through a soft expectation mechanism, where each token contributes with a learned importance weight.
As a result, the EOT token remains the dominant alignment signal inherited from CLIP pre-training, while informative non-EOT tokens (e.g., \emph{knee}, \emph{high}, \emph{socks}) provide complementary fine-grained cues that refine the spatial structure of the alignment map.
Rather than suppressing the EOT token, Expectation Alignment enhances it with token-level semantic details, enabling fine-grained vision-language alignment without introducing hard token selection or additional supervision.

We present qualitative results of ExpAlign in Figure \ref{fig:visual}. Subfigures \ref{fig:visuala} and \ref{fig:visualb} present correctly detected objects, where all corresponding prompt phrases are completely absent from the training data. This clearly demonstrates the robust zero-shot generalization capability of ExpAlign. Furthermore, the results in subfigures \ref{fig:visualc} and \ref{fig:visuald} reveal that the model exhibits a non-trivial level of referring expression comprehension (REC) ability, successfully grounding complex and novel expressions even in unseen scenarios. Notably, ExpAlign delivers exceptionally high-quality segmentation masks, with particularly impressive performance at object boundaries and under partial occlusion, highlighting its strong capability in precise instance delineation. See more examples in Appendix~\ref{app:vis_qualitative}.

\section{Conclusion}
In this paper, we presented \textbf{ExpAlign}, an expectation-guided vision-language alignment framework for open-vocabulary grounding under weak and ambiguous supervision. By introducing the Expectation Alignment Head (EAH), our method aggregates token-level vision-language similarities through a principled expectation mechanism, enabling implicit token selection and soft region alignment without relying on explicit instance-level annotations. Furthermore, we proposed a multi-scale consistency regularization strategy, including a Top-K multi-positive contrastive objective and a Geometry-Aware Consistency Objective, to jointly enhance semantic discriminability and spatial coherence of alignment maps during training. Extensive experiments on open-vocabulary detection and instance segmentation benchmarks demonstrate that ExpAlign consistently improves performance, particularly on long-tail categories and zero-shot segmentation quality, while remaining lightweight and fully compatible with standard detection and segmentation pipelines. We believe this work offers a practical and theoretically grounded step toward more expressive and robust vision-language alignment for open-world visual understanding.


\bibliography{ref}

@inproceedings{ilse2018attention,
  title={Attention-based deep multiple instance learning},
  author={Ilse, Maximilian and Tomczak, Jakub and Welling, Max},
  booktitle={International conference on machine learning},
  pages={2127--2136},
  year={2018},
  organization={PMLR}
}

@inproceedings{radford2021learning,
  title={Learning transferable visual models from natural language supervision},
  author={Radford, Alec and Kim, Jong Wook and Hallacy, Chris and Ramesh, Aditya and Goh, Gabriel and Agarwal, Sandhini and Sastry, Girish and Askell, Amanda and Mishkin, Pamela and Clark, Jack and others},
  booktitle={International conference on machine learning},
  pages={8748--8763},
  year={2021},
  organization={PmLR}
}

@inproceedings{li2023blip,
  title={Blip-2: Bootstrapping language-image pre-training with frozen image encoders and large language models},
  author={Li, Junnan and Li, Dongxu and Savarese, Silvio and Hoi, Steven},
  booktitle={International conference on machine learning},
  pages={19730--19742},
  year={2023},
  organization={PMLR}
}

@article{simeoni2025dinov3,
  title={Dinov3},
  author={Sim{\'e}oni, Oriane and Vo, Huy V and Seitzer, Maximilian and Baldassarre, Federico and Oquab, Maxime and Jose, Cijo and Khalidov, Vasil and Szafraniec, Marc and Yi, Seungeun and Ramamonjisoa, Micha{\"e}l and others},
  journal={arXiv preprint arXiv:2508.10104},
  year={2025}
}

@inproceedings{jia2021scaling,
  title={Scaling up visual and vision-language representation learning with noisy text supervision},
  author={Jia, Chao and Yang, Yinfei and Xia, Ye and Chen, Yi-Ting and Parekh, Zarana and Pham, Hieu and Le, Quoc and Sung, Yun-Hsuan and Li, Zhen and Duerig, Tom},
  booktitle={International conference on machine learning},
  pages={4904--4916},
  year={2021},
  organization={PMLR}
}

@article{oord2018representation,
  title={Representation learning with contrastive predictive coding},
  author={Oord, Aaron van den and Li, Yazhe and Vinyals, Oriol},
  journal={arXiv preprint arXiv:1807.03748},
  year={2018}
}

@article{ren2024grounding,
  title={Grounding dino 1.5: Advance the" edge" of open-set object detection},
  author={Ren, Tianhe and Jiang, Qing and Liu, Shilong and Zeng, Zhaoyang and Liu, Wenlong and Gao, Han and Huang, Hongjie and Ma, Zhengyu and Jiang, Xiaoke and Chen, Yihao and others},
  journal={arXiv preprint arXiv:2405.10300},
  year={2024}
}

@article{wang2025yoloe,
  title={Yoloe: Real-time seeing anything},
  author={Wang, Ao and Liu, Lihao and Chen, Hui and Lin, Zijia and Han, Jungong and Ding, Guiguang},
  journal={arXiv preprint arXiv:2503.07465},
  year={2025}
}

@inproceedings{cheng2024yolo,
  title={Yolo-world: Real-time open-vocabulary object detection},
  author={Cheng, Tianheng and Song, Lin and Ge, Yixiao and Liu, Wenyu and Wang, Xinggang and Shan, Ying},
  booktitle={Proceedings of the IEEE/CVF conference on computer vision and pattern recognition},
  pages={16901--16911},
  year={2024}
}

@inproceedings{liu2024grounding,
  title={Grounding dino: Marrying dino with grounded pre-training for open-set object detection},
  author={Liu, Shilong and Zeng, Zhaoyang and Ren, Tianhe and Li, Feng and Zhang, Hao and Yang, Jie and Jiang, Qing and Li, Chunyuan and Yang, Jianwei and Su, Hang and others},
  booktitle={European conference on computer vision},
  pages={38--55},
  year={2024},
  organization={Springer}
}

@inproceedings{zhou2022detecting,
  title={Detecting twenty-thousand classes using image-level supervision},
  author={Zhou, Xingyi and Girdhar, Rohit and Joulin, Armand and Kr{\"a}henb{\"u}hl, Philipp and Misra, Ishan},
  booktitle={European conference on computer vision},
  pages={350--368},
  year={2022},
  organization={Springer}
}

@inproceedings{li2022grounded,
  title={Grounded language-image pre-training},
  author={Li, Liunian Harold and Zhang, Pengchuan and Zhang, Haotian and Yang, Jianwei and Li, Chunyuan and Zhong, Yiwu and Wang, Lijuan and Yuan, Lu and Zhang, Lei and Hwang, Jenq-Neng and others},
  booktitle={Proceedings of the IEEE/CVF conference on computer vision and pattern recognition},
  pages={10965--10975},
  year={2022}
}

@inproceedings{kamath2021mdetr,
  title={Mdetr-modulated detection for end-to-end multi-modal understanding},
  author={Kamath, Aishwarya and Singh, Mannat and LeCun, Yann and Synnaeve, Gabriel and Misra, Ishan and Carion, Nicolas},
  booktitle={Proceedings of the IEEE/CVF international conference on computer vision},
  pages={1780--1790},
  year={2021}
}

@article{shao2024deepseekmath,
  title={Deepseekmath: Pushing the limits of mathematical reasoning in open language models},
  author={Shao, Zhihong and Wang, Peiyi and Zhu, Qihao and Xu, Runxin and Song, Junxiao and Bi, Xiao and Zhang, Haowei and Zhang, Mingchuan and Li, YK and Wu, Yang and others},
  journal={arXiv preprint arXiv:2402.03300},
  year={2024}
}

@inproceedings{shao2019objects365,
  title={Objects365: A large-scale, high-quality dataset for object detection},
  author={Shao, Shuai and Li, Zeming and Zhang, Tianyuan and Peng, Chao and Yu, Gang and Zhang, Xiangyu and Li, Jing and Sun, Jian},
  booktitle={Proceedings of the IEEE/CVF international conference on computer vision},
  pages={8430--8439},
  year={2019}
}

@inproceedings{yu2016modeling,
  title={Modeling context in referring expressions},
  author={Yu, Licheng and Poirson, Patrick and Yang, Shan and Berg, Alexander C and Berg, Tamara L},
  booktitle={European conference on computer vision},
  pages={69--85},
  year={2016},
  organization={Springer}
}

@inproceedings{plummer2015flickr30k,
  title={Flickr30k entities: Collecting region-to-phrase correspondences for richer image-to-sentence models},
  author={Plummer, Bryan A and Wang, Liwei and Cervantes, Chris M and Caicedo, Juan C and Hockenmaier, Julia and Lazebnik, Svetlana},
  booktitle={Proceedings of the IEEE international conference on computer vision},
  pages={2641--2649},
  year={2015}
}

@article{hudson2019gqa,
  title={Gqa: a new dataset for compositional question answering over real-world images},
  author={Hudson, Drew A and Manning, Christopher D},
  journal={arXiv preprint arXiv:1902.09506},
  volume={3},
  number={8},
  pages={1},
  year={2019}
}

@inproceedings{cai2022x,
  title={X-detr: A versatile architecture for instance-wise vision-language tasks},
  author={Cai, Zhaowei and Kwon, Gukyeong and Ravichandran, Avinash and Bas, Erhan and Tu, Zhuowen and Bhotika, Rahul and Soatto, Stefano},
  booktitle={European Conference on Computer Vision},
  pages={290--308},
  year={2022},
  organization={Springer}
}

@article{fu2025wedetect,
  title={WeDetect: Fast Open-Vocabulary Object Detection as Retrieval},
  author={Fu, Shenghao and Su, Yukun and Rao, Fengyun and Lyu, Jing and Xie, Xiaohua and Zheng, Wei-Shi},
  journal={arXiv preprint arXiv:2512.12309},
  year={2025}
}

@inproceedings{gupta2019lvis,
  title={Lvis: A dataset for large vocabulary instance segmentation},
  author={Gupta, Agrim and Dollar, Piotr and Girshick, Ross},
  booktitle={Proceedings of the IEEE/CVF conference on computer vision and pattern recognition},
  pages={5356--5364},
  year={2019}
}

@article{kang2025clip,
  title={Is CLIP ideal? No. Can we fix it? Yes!},
  author={Kang, Raphi and Song, Yue and Gkioxari, Georgia and Perona, Pietro},
  journal={arXiv preprint arXiv:2503.08723},
  year={2025}
}

@article{zhang2022glipv2,
  title={Glipv2: Unifying localization and vision-language understanding},
  author={Zhang, Haotian and Zhang, Pengchuan and Hu, Xiaowei and Chen, Yen-Chun and Li, Liunian and Dai, Xiyang and Wang, Lijuan and Yuan, Lu and Hwang, Jenq-Neng and Gao, Jianfeng},
  journal={Advances in Neural Information Processing Systems},
  volume={35},
  pages={36067--36080},
  year={2022}
}

@article{zeng2021multi,
  title={Multi-grained vision language pre-training: Aligning texts with visual concepts},
  author={Zeng, Yan and Zhang, Xinsong and Li, Hang},
  journal={arXiv preprint arXiv:2111.08276},
  year={2021}
}

@article{kang2025empower,
  title={Empower Words: DualGround for Structured Phrase and Sentence-Level Temporal Grounding},
  author={Kang, Minseok and Lee, Minhyeok and Kim, Minjung and Kim, Donghyeong and Lee, Sangyoun},
  journal={arXiv preprint arXiv:2510.20244},
  year={2025}
}

@inproceedings{liang2021exploring,
  title={Exploring geometry-aware contrast and clustering harmonization for self-supervised 3d object detection},
  author={Liang, Hanxue and Jiang, Chenhan and Feng, Dapeng and Chen, Xin and Xu, Hang and Liang, Xiaodan and Zhang, Wei and Li, Zhenguo and Van Gool, Luc},
  booktitle={Proceedings of the IEEE/CVF International Conference on Computer Vision},
  pages={3293--3302},
  year={2021}
}

@inproceedings{achal2022evaluating,
  title={Evaluating Large-Vocabulary Object Detectors: The Devil is in the Details},
  author={Achal Dave, PD and Ramanan, D and Kirillov, A and Girshick, R},
  year={2022},
  organization={CVPR}
}

@article{yao2022detclip,
  title={Detclip: Dictionary-enriched visual-concept paralleled pre-training for open-world detection},
  author={Yao, Lewei and Han, Jianhua and Wen, Youpeng and Liang, Xiaodan and Xu, Dan and Zhang, Wei and Li, Zhenguo and Xu, Chunjing and Xu, Hang},
  journal={Advances in Neural Information Processing Systems},
  volume={35},
  pages={9125--9138},
  year={2022}
}

@article{shazeer2020glu,
  title={Glu variants improve transformer},
  author={Shazeer, Noam},
  journal={arXiv preprint arXiv:2002.05202},
  year={2020}
}

@inproceedings{varghese2024yolov8,
  title={Yolov8: A novel object detection algorithm with enhanced performance and robustness},
  author={Varghese, Rejin and Sambath, M},
  booktitle={2024 International conference on advances in data engineering and intelligent computing systems (ADICS)},
  pages={1--6},
  year={2024},
  organization={IEEE}
}

@inproceedings{lin2014microsoft,
  title={Microsoft coco: Common objects in context},
  author={Lin, Tsung-Yi and Maire, Michael and Belongie, Serge and Hays, James and Perona, Pietro and Ramanan, Deva and Doll{\'a}r, Piotr and Zitnick, C Lawrence},
  booktitle={European conference on computer vision},
  pages={740--755},
  year={2014},
  organization={Springer}
}

@article{ravi2024sam,
  title={Sam 2: Segment anything in images and videos},
  author={Ravi, Nikhila and Gabeur, Valentin and Hu, Yuan-Ting and Hu, Ronghang and Ryali, Chaitanya and Ma, Tengyu and Khedr, Haitham and R{\"a}dle, Roman and Rolland, Chloe and Gustafson, Laura and others},
  journal={arXiv preprint arXiv:2408.00714},
  year={2024}
}

@inproceedings{zhong2022regionclip,
  title={Regionclip: Region-based language-image pretraining},
  author={Zhong, Yiwu and Yang, Jianwei and Zhang, Pengchuan and Li, Chunyuan and Codella, Noel and Li, Liunian Harold and Zhou, Luowei and Dai, Xiyang and Yuan, Lu and Li, Yin and others},
  booktitle={Proceedings of the IEEE/CVF conference on computer vision and pattern recognition},
  pages={16793--16803},
  year={2022}
}

@article{singh2025varp,
  title={VARP: Reinforcement Learning from Vision-Language Model Feedback with Agent Regularized Preferences},
  author={Singh, Anukriti and Bhaskar, Amisha and Yu, Peihong and Chakraborty, Souradip and Dasyam, Ruthwik and Bedi, Amrit and Tokekar, Pratap},
  journal={arXiv preprint arXiv:2503.13817},
  year={2025}
}

@article{zhengprll,
  title={PRLL: Policy Regularization and Reward Shaping Assisted by Large Language Models},
  author={Zheng, Qianxia and Luo, Xiangfeng and Wang, Tao}
}

@article{zhai2024fine,
  title={Fine-tuning large vision-language models as decision-making agents via reinforcement learning},
  author={Zhai, Simon and Bai, Hao and Lin, Zipeng and Pan, Jiayi and Tong, Peter and Zhou, Yifei and Suhr, Alane and Xie, Saining and LeCun, Yann and Ma, Yi and others},
  journal={Advances in neural information processing systems},
  volume={37},
  pages={110935--110971},
  year={2024}
}

@article{huang2025vlm,
  title={Vlm-rl: A unified vision language models and reinforcement learning framework for safe autonomous driving},
  author={Huang, Zilin and Sheng, Zihao and Qu, Yansong and You, Junwei and Chen, Sikai},
  journal={Transportation Research Part C: Emerging Technologies},
  volume={180},
  pages={105321},
  year={2025},
  publisher={Elsevier}
}

@inproceedings{zhang2023simple,
  title={A simple framework for open-vocabulary segmentation and detection},
  author={Zhang, Hao and Li, Feng and Zou, Xueyan and Liu, Shilong and Li, Chunyuan and Yang, Jianwei and Zhang, Lei},
  booktitle={Proceedings of the IEEE/CVF International Conference on Computer Vision},
  pages={1020--1031},
  year={2023}
}

@article{wang2024ovlw,
  title={OVLW-DETR: Open-Vocabulary Light-Weighted Detection Transformer},
  author={Wang, Yu and Su, Xiangbo and Chen, Qiang and Zhang, Xinyu and Xi, Teng and Yao, Kun and Ding, Errui and Zhang, Gang and Wang, Jingdong},
  journal={arXiv preprint arXiv:2407.10655},
  year={2024}
}

@article{zhao2024omdet,
  title={OmDet: Large-scale vision-language multi-dataset pre-training with multimodal detection network},
  author={Zhao, Tiancheng and Liu, Peng and Lee, Kyusong},
  journal={IET Computer Vision},
  year={2024},
  publisher={Wiley Online Library}
}
\bibliographystyle{icml2026}

\newpage
\appendix
\onecolumn
\section{Connection to Multiple Instance Learning}
\label{app:mil_supp}

Although ExpAlign is presented as a vision-language alignment module rather than a canonical MIL algorithm, it admits an exact interpretation and equivalence to attention-based soft pooling in the MIL framework. Below we give a concise mapping and a proof sketch that justifies the claim in Section~\ref{sec:mil}.

\textbf{Notation.} Fix a textual prompt $p$ and a feature scale $s$. Let $\Omega=\{1,\dots,N\}$ index spatial locations in the feature map ($N=H_sW_s$), and let $L$ denote the number of valid text tokens. For each spatial location $i\in\Omega$ and token $l\in\{1,\dots,L\}$, define the token--patch affinity
\[
S(i,l)=\langle F_s(x,y),\,T_{b,p}(l)\rangle,
\]
where $i$ is the flattened index of $(x,y)$. The spatially averaged response of token $l$ is
\[
\bar S(l)=\frac{1}{N}\sum_{i\in\Omega} S(i,l),
\]
and the token posterior is given by
\[
\pi(l)=\frac{\exp(\bar S(l)/\tau_t)}{\sum_{l'=1}^L \exp(\bar S(l')/\tau_t)}.
\]
The EAM assigns to each spatial location the score
\[
\widetilde S(i)=\sum_{l=1}^L \pi(l)\,S(i,l).
\]

\textbf{Reformulation as instance-wise linear pooling.}
For each instance $i$, define the token-affinity vector
\[
v_i=(S(i,1),\dots,S(i,L))^\top \in \mathbb{R}^L,
\]
and collect the token posteriors into $\pi\in\mathbb{R}^L$. With this notation, the EAM score can be written compactly as
\[
\widetilde S(i)=\pi^\top v_i,
\]
which shows that each instance score is obtained by applying the same linear functional to its token-affinity vector.

\textbf{MIL interpretation and bag-level aggregation.}
From the MIL perspective, the prompt $p$ defines a bag whose instances are the unordered set $\{v_i\}_{i\in\Omega}$. The mapping $v_i\mapsto\widetilde S(i)$ is permutation equivariant, and the subsequent aggregation used by ExpAlign,
\[
\ell=\frac{1}{|\mathrm{TopK}(\widetilde S)|}\sum_{i\in\mathrm{TopK}(\widetilde S)} \widetilde S(i),
\]
is permutation invariant. Such a construction satisfies the defining requirement of MIL pooling operators and corresponds to a Top-$K$ variant of attention-based soft pooling, where discriminative instances dominate the bag-level response.

\textbf{Equivalence to attention-based MIL pooling.}
Attention-based MIL methods \citep{ilse2018attention} compute a scalar score for each instance via an attention mechanism and aggregate these scores using a permutation-invariant operator. In ExpAlign, attention is factorized into a token-level posterior $\pi$, shared across instances, followed by instance-level pooling over $\widetilde S(i)$. Algebraically, both formulations reduce to computing instance scores $g(v_i)$ and applying a soft or Top-$K$ aggregation over instances. The difference lies only in how attention weights are parameterized, not in the form of the pooling operator.

\textbf{Permutation invariance and expressiveness.}
Because $\pi$ depends only on the set $\{v_i\}$ through the averaged statistics $\{\bar S(l)\}$, the overall operator from $\{v_i\}$ to $\ell$ is permutation invariant. Moreover, by adjusting the temperature $\tau_t$ and the Top-$K$ ratio, the pooling behavior interpolates between mean, max, and soft-attention pooling, matching the expressive family of attention-based MIL operators.

\textbf{Discussion.}
This equivalence clarifies that ExpAlign performs a principled MIL-style soft selection over instances while allowing uncertainty at both the token and spatial levels. The auxiliary multi-positive InfoNCE loss and the Geometry-Aware Consistency Objective can thus be viewed as bag-level discriminative losses and intra-bag energy shaping, respectively, consistent with standard MIL training principles.

\textbf{Remarks.} For completeness, the variational derivation in Appendix~\ref{app:variational} further shows that the geometry-aware consistency term yields a Gibbs reweighting of prompt--patch probabilities under a Lagrangian-constrained free-energy, which reshapes intra-instance mass without requiring explicit instance labels.

\section{Variational Derivation of Gibbs Reweighting in Energy-Based Consistency Regularization}\label{app:variational}

We consider a finite collection of prompt--patch pairs indexed by $(p,i)$,
with $p\in\{1,\dots,P\}$ and $i\in\Omega$ ($|\Omega|$ finite).
For compactness we sometimes write a generic index $\alpha$ to denote a pair $(p,i)$.

\medskip
\begin{assumption}[Energy field]
    There is a real-valued alignment score field $\tilde S_{p}(i)\in\mathbb R$.
    We define the associated \emph{energy} by
    \begin{equation}\label{eq:energy_def}
        E(p,i) \;=\; -\tilde S_{p}(i).
    \end{equation}
    We assume $E(p,i)$ is uniformly bounded on the finite domain.
\end{assumption}

\begin{assumption}[Instance-wise geometry score]
    For each image $b$ and prompt $p$ we are given a bounded geometry score $A_{b,p}(i)\in\mathbb R$ defined for $i\in\Omega$ such that:
    \begin{enumerate}
      \item $A_{b,p}$ depends only on intra-instance relative statistics (e.g. mean and standard deviation computed over the ground-truth mask $\mathcal M_{b,p}$), and is therefore invariant to adding a constant to $\tilde S$ (affine invariance in the additive sense) and to monotone affine rescaling when appropriately adjusting normalization;
      \item $A_{b,p}$ is bounded and (locally) Lipschitz in $\tilde S$ (so gradient bounds exist and empirical gradients are well-defined).
    \end{enumerate}
\end{assumption}

\begin{assumption}[Regularization parameters]
    Let $\tau>0$ be the temperature (entropy weight) and $\lambda\in\mathbb R$ be the geometry weight (we will take $\lambda\ge0$ in most discussion).
\end{assumption}

\medskip

We denote by $\mathcal P$ the probability simplex over all prompt--patch indices:
\[
\mathcal P = \Big\{ \mathbb Q:\ \mathbb Q(p,i)\ge0,\ \sum_{p,i}\mathbb Q(p,i)=1 \Big\}.
\]
Let $\mathbb U$ denote the uniform distribution on the finite set of pairs $(p,i)$,
i.e. $\mathbb U(p,i)=1/(P|\Omega|)$.

\begin{theorem}[Variational optimality and induced Gibbs form]\label{thm:vari-gibbs-form}
    Under Assumptions B1--B3, consider the variational free-energy functional
    \begin{equation}\label{eq:free_energy}
    \mathcal F[\mathbb Q]
    \;=\;
    \mathbb E_{\mathbb Q}\big[ E(p,i)\big]
    \;-\;
    \lambda\,\mathbb E_{\mathbb Q}\big[ A_{b,p}(i)\big]
    \;+\;
    \tau\,\mathrm{KL}\big(\mathbb Q \;\|\; \mathbb U\big),
    \qquad
    \mathbb Q\in\mathcal P.
    \end{equation}
    Then:
    \begin{enumerate}
      \item The functional $\mathcal F$ is strictly convex on $\mathcal P$ and admits a unique minimizer $\mathbb Q^\star\in\mathcal P$.
      \item The minimizer has the explicit Gibbs (exponential-family) form
      \begin{equation}\label{eq:gibbs_solution}
      \mathbb Q^\star(p,i)
      \;=\;
      \frac{\exp\!\big( -\frac{1}{\tau}\big(E(p,i)-\lambda A_{b,p}(i)\big)\big)}
      {\sum_{p',i'} \exp\!\big( -\frac{1}{\tau}\big(E(p',i')-\lambda A_{b',p'}(i')\big)\big)}.
      \end{equation}
      \item Equivalently, substituting $E(p,i)=-\tilde S_p(i)$, the optimal distribution can be written
      \[
      \mathbb Q^\star(p,i)
      \;=\;
      \frac{\exp\!\big( \tfrac{1}{\tau}\big(\tilde S_p(i)+\lambda A_{b,p}(i)\big)\big)}
      {\sum_{p',i'} \exp\!\big( \tfrac{1}{\tau}\big(\tilde S_{p'}(i')+\lambda A_{b',p'}(i')\big)\big)}.
      \]
      \item Moreover, minimizing the cross-entropy (or KL divergence) from an empirical target distribution
      $Q_{\mathrm{target}}(p,i)\propto \mathbb{1}_{i\in\mathcal M_{b,p}}\,w_{b,p}(i)$
      to $\mathbb Q^\star$ yields the geometry-aware loss of the form
      \[
      \mathcal L_{\mathrm{geo}}
      \;=\;
      -\sum_{b,p}\sum_{i\in\mathcal M_{b,p}} \tilde c_{b,p,i}\,\log \mathbb Q^\star(p,i),
      \]
      which is equivalent up to normalization constants to the loss reported in the main text.
    \end{enumerate}
\end{theorem}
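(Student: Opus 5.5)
The plan is the standard Gibbs variational principle on a finite simplex. Write $\alpha=(p,i)$ and $n=P|\Omega|$, and collect the linear part into a single potential $\phi(\alpha)=E(p,i)-\lambda A_{b,p}(i)$. Then
\[
\mathcal F[\mathbb Q]=\sum_\alpha \mathbb Q(\alpha)\phi(\alpha)+\tau\sum_\alpha \mathbb Q(\alpha)\log\mathbb Q(\alpha)+\tau\log n .
\]
Assumptions B1 and B2 make $\phi$ bounded, so every term is finite on $\mathcal P$.

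\textbf{Part 1.} The map $q\mapsto q\log q$ is strictly convex on $[0,1]$, with the convention $0\log 0=0$. Hence the negative entropy is strictly convex on $\mathcal P$. The remaining term is linear, and $\tau>0$ by B3, so $\mathcal F$ is strictly convex. $\mathcal P$ is compact and $\mathcal F$ is continuous, so a minimizer exists, and strict convexity makes it unique.

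\textbf{Parts 2 and 3.} Rather than work with Lagrange multipliers, which would force a separate argument that the minimizer is interior, I will use the exact completion-of-KL identity. Define
\[
Z=\sum_\alpha e^{-\phi(\alpha)/\tau}, \qquad \mathbb Q^\star(\alpha)=\frac{e^{-\phi(\alpha)/\tau}}{Z}.
\]
Then a direct computation gives
\[
\mathcal F[\mathbb Q]=\tau\,\mathrm{KL}(\mathbb Q\,\|\,\mathbb Q^\star)-\tau\log Z+\tau\log n .
\]
Gibbs' inequality says $\mathrm{KL}\ge 0$ with equality iff $\mathbb Q=\mathbb Q^\star$. This identifies the minimizer directly and independently re-proves uniqueness. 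It also covers the boundary of the simplex automatically, since $\mathbb Q^\star$ has full support. The resulting formula is exactly (\ref{eq:gibbs_solution}). Substituting $E=-\tilde S$ from (\ref{eq:energy_def}) gives Part 3. The index $b$ is fixed throughout, so $A_{b',p'}$ in the denominator should be read as $A_{b,p'}$.

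\textbf{Part 4.} This is the main obstacle, because the statement is only "up to normalization constants." I would take the target $Q_{\mathrm{target}}(p,i)=\mathbb 1_{i\in\mathcal M_{b,p}}w_{b,p}(i)/W$, with normalizer $W=\sum_{p,i}\mathbb 1_{i\in\mathcal M_{b,p}}w_{b,p}(i)$. Then
\[
\mathrm{CE}(Q_{\mathrm{target}},\mathbb Q)=-\tfrac{1}{W}\sum_{p}\sum_{i\in\mathcal M_{b,p}}w_{b,p}(i)\log\mathbb Q(p,i),
\]
and $\mathrm{KL}$ differs from this only by the entropy of the target, which does not depend on the model parameters.

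To match (\ref{eq:Lgeo}), I would choose $w_{b,p}=A_{b,p}$, so $\tilde c=A/W$. I would then replace $W$ by the count normalizer $\sum_{b,p}|\mathcal M_{b,p}|$ of the main text, which changes the loss only by a positive constant rescaling.

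Two points need care here. First, $A$ can be negative, so the "target" is a signed measure rather than a distribution. I will state that the equivalence holds formally at the level of the loss functional, or after shifting $w=A+c$; the shift adds $c\sum\log\mathbb Q$ over the mask, and I will note that this is not a constant term. Second, the main text evaluates $\log\mathbb P_b$, which is the $\lambda=0$, $\tau=1$ instance of $\mathbb Q^\star$. I would identify the two in that specialization, with the geometry reweighting entering through the weights $A$ rather than through the energy.
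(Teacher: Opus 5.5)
Your Parts 1--3 are correct, and they reach the Gibbs form by a different route than the paper.

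\textbf{Parts 1--3: your route versus the paper's.} The paper forms a Lagrangian with a multiplier for $\sum\mathbb Q=1$, differentiates at an interior point, and solves to get $\mathbb Q\propto\mathbb U\,e^{-\phi/\tau}$. It then invokes strict convexity for sufficiency. This derives the formula constructively, but its rigor rests on the unstated fact that an interior stationary point of a convex function on the simplex is a global minimizer there. Your completion-of-KL identity $\mathcal F[\mathbb Q]=\tau\,\mathrm{KL}(\mathbb Q\,\|\,\mathbb Q^\star)-\tau\log Z+\tau\log n$ needs the answer guessed in advance. Once that is done, existence, uniqueness and the boundary case all follow at once from Gibbs' inequality. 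It also yields more: the optimality gap is exactly $\tau\,\mathrm{KL}(\mathbb Q\,\|\,\mathbb Q^\star)$, and the minimal free energy is $-\tau\log Z+\tau\log n$. Your Part 1 is also tighter than the paper's, which asserts strict convexity only on the interior and then uses it on all of $\mathcal P$. Your reading of $A_{b',p'}$ as $A_{b,p'}$ fixes a genuine typo.

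\textbf{The gap in Part 4.} It lies where you take $w_{b,p}=A_{b,p}$, write $\tilde c=A/W$, and say that replacing $W$ by $N=\sum_{b,p}|\mathcal M_{b,p}|$ is a positive constant rescaling. $A_{b,p}$ is a standardized deviation of $R_b(p,\cdot)$ from its own mean over $\mathcal M_{b,p}$. So $\sum_{i\in\mathcal M_{b,p}}A_{b,p}(i)=0$ whenever the clip is inactive, and for fixed $b$, $W$ vanishes identically. Once clipping is active, $W$ is a residual of indeterminate sign. Hence $\mathbb{1}_{\mathcal M}A$ cannot be normalized at all, even as a signed measure, and $W/N$ is zero rather than a positive constant. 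The same fact shows the main-text loss is not a cross-entropy against any target distribution. With $\lambda=0$, $\tau=1$, and $Z_b$ the softmax normalizer of $\mathbb P_b$,
\[
-\sum_{p}\sum_{i\in\mathcal M_{b,p}}A_{b,p}(i)\log\mathbb P_b(p,i)=-\sum_{p}\sum_{i\in\mathcal M_{b,p}}A_{b,p}(i)\,\tilde S^{\mathrm{up}}_{b,p}(i)+W\log Z_b .
\]
The partition term therefore disappears, which cannot happen when the weights carry unit mass.

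Two smaller points also need attention. Even for positive weights, per-image normalizers $W_b$ set against the batch count $N$ give image-dependent factors $W_b/N$, not one constant. And $Q_{\mathrm{target}}$ depends on the model through $R=\sigma(\tilde S)$, so its entropy is model-independent only if $A$ is detached.

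\textbf{Repair.} Drop the target-distribution reading. Set $\tilde c_{b,p,i}=A_{b,p}(i)/N$ directly and specialize to $\lambda=0$, $\tau=1$ as you propose. Then present Part 4 as a form-level identification of an advantage-weighted, signed log-likelihood, with $A$ treated as a stop-gradient weight. The paper's own argument has exactly this defect: it posits a nonnegative $w$ and then offers $w=A$ as the example. So neither proof establishes Part 4 as literally stated. Your caveats about the sign and the non-constant shift are correct and go beyond the paper, but the normalization claim has to go.
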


\textbf{Remarks.}
\begin{itemize}
  \item The KL term provides strict convexity and enforces positive entropy, preventing collapse to a point mass; the linear terms (expectation of $E$ and $A$) are affine in $\mathbb Q$ and therefore preserve convexity.
  \item The parameter $\tau$ controls the trade-off between fidelity to energy $E$ and entropy (stability vs. selectivity); $\lambda$ controls the strength of instance-local geometric shaping.
  \item Additive shifts of $E$ (i.e. $E\mapsto E+c$) do not change $\mathbb Q^\star$; multiplicative rescaling of $E$ can be absorbed into $\tau$ (i.e. $aE/\tau=(E)/( \tau/a)$).
\end{itemize}

\begin{proof}
    We supply a complete and explicit derivation in several carefully enumerated steps.

    Work on the finite index set $\mathcal I=\{(p,i)\}$. Any $\mathbb Q\in\mathcal P$ can be represented as a vector $\mathbb Q\in\mathbb R^{|\mathcal I|}$ with nonnegative entries summing to one.
    On this finite dimensional simplex all functions below are well-defined and differentiable on the interior.
    
    \textbf{Convexity and existence/uniqueness.}
    
    Observe that $\mathcal F[\mathbb Q]$ in \eqref{eq:free_energy} can be written as
    \[
    \mathcal F[\mathbb Q]
    =
    \sum_{(p,i)\in\mathcal I} \mathbb Q(p,i)\big( E(p,i)-\lambda A_{b,p}(i)\big)
    +
    \tau\sum_{(p,i)} \mathbb Q(p,i)\log\frac{\mathbb Q(p,i)}{\mathbb U(p,i)}.
    \]
    The first term is linear in $\mathbb Q$; the second term is $\tau$ times the relative entropy (KL), which is strictly convex in $\mathbb Q$ on the interior of the simplex. Hence $\mathcal F$ is strictly convex. Because $\mathcal P$ is compact and $\mathcal F$ is continuous, a unique minimizer exists.
    
    \textbf{First-order optimality (variational derivative).}
    
    To find the minimizer, form the Lagrangian for the constrained minimization (constraint: $\sum_{(p,i)}\mathbb Q(p,i)=1$):
    \[
    \mathcal L(\mathbb Q,\eta)
    =
    \sum_{(p,i)} \mathbb Q(p,i)\big( E(p,i)-\lambda A_{b,p}(i)\big)
    +
    \tau\sum_{(p,i)} \mathbb Q(p,i)\log\frac{\mathbb Q(p,i)}{\mathbb U(p,i)}
    + \eta\big( \sum_{(p,i)}\mathbb Q(p,i) - 1\big),
    \]
    where $\eta\in\mathbb R$ is the Lagrange multiplier enforcing normalization.
    
    Take partial derivative with respect to $\mathbb Q(\bar p,\bar i)$ (interior point) and set to zero:
    \begin{align*}
    0 \;=\; \frac{\partial\mathcal L}{\partial \mathbb Q(\bar p,\bar i)}
    &=
    E(\bar p,\bar i)-\lambda A_{b,\bar p}(\bar i)
    + \tau\Big( \log\frac{\mathbb Q(\bar p,\bar i)}{\mathbb U(\bar p,\bar i)} + 1\Big)
    + \eta.
    \end{align*}
    Rearrange to isolate the log term:
    \[
    \log\frac{\mathbb Q(\bar p,\bar i)}{\mathbb U(\bar p,\bar i)}
    =
    -\frac{1}{\tau}\big(E(\bar p,\bar i)-\lambda A_{b,\bar p}(\bar i)\big)
    -1 - \frac{\eta}{\tau}.
    \]
    Exponentiating both sides yields
    \[
    \mathbb Q(\bar p,\bar i)
    =
    \mathbb U(\bar p,\bar i)
    \exp\!\Big( -\frac{1}{\tau}\big(E(\bar p,\bar i)-\lambda A_{b,\bar p}(\bar i)\big)\Big)
    \cdot \exp\!\Big(-1-\frac{\eta}{\tau}\Big).
    \]
    Since $\exp(-1-\eta/\tau)$ is a global scalar independent of $(\bar p,\bar i)$, normalization enforces that this scalar equals the reciprocal of the partition sum. Using the explicit form of $\mathbb U(p,i)$ (uniform), we obtain the normalized Gibbs form
    \[
    \mathbb Q^\star(p,i)
    =
    \frac{\exp\!\big( -\frac{1}{\tau}\big(E(p,i)-\lambda A_{b,p}(i)\big)\big)}
    {\sum_{p',i'} \exp\!\big( -\frac{1}{\tau}\big(E(p',i')-\lambda A_{b',p'}(i')\big)\big)}.
    \]
    This completes the derivation of \eqref{eq:gibbs_solution} and establishes both necessity and sufficiency of this form for optimality (sufficiency follows from strict convexity).
    
    \textbf{Substitution $E=-\tilde S$ and alternative form.}
    Using $E(p,i)=-\tilde S_{p}(i)$, rewrite \eqref{eq:gibbs_solution} as
    \[
    \mathbb Q^\star(p,i)
    =
    \frac{\exp\!\big( \tfrac{1}{\tau}\big(\tilde S_p(i)+\lambda A_{b,p}(i)\big)\big)}
    {\sum_{p',i'} \exp\!\big( \tfrac{1}{\tau}\big(\tilde S_{p'}(i')+\lambda A_{b',p'}(i')\big)\big)}.
    \]
    This shows that the geometry term $A_{b,p}(i)$ directly enters the logits of the Gibbs distribution and hence modifies the model posterior in a multiplicative exponential manner.
    
    \textbf{Equivalence to cross-entropy style training loss.}
    
    Suppose we define a target empirical distribution on $(p,i)$ for training,
    \[
    Q_{\mathrm{target}}(p,i)
    \;=\;
    \frac{\mathbb{1}\{i\in\mathcal M_{b,p}\}\, w_{b,p}(i)}
    {\sum_{p',i'} \mathbb{1}\{i'\in\mathcal M_{b',p'}\}\, w_{b',p'}(i')},
    \]
    where $w_{b,p}(i)$ is a nonnegative weight (e.g. $w_{b,p}(i)=A_{b,p}(i)$ or another monotone transform). The standard cross-entropy (expected negative log-likelihood) of this target under model $\mathbb Q$ is
    \[
    \mathrm{CE}(Q_{\mathrm{target}}\| \mathbb Q)
    =
    -\sum_{p,i} Q_{\mathrm{target}}(p,i)\,\log \mathbb Q(p,i).
    \]
    Minimizing this CE over model parameters (i.e. making $\mathbb Q$ approximate $Q_{\mathrm{target}}$) is equivalent to minimizing $\mathrm{KL}(Q_{\mathrm{target}}\|\mathbb Q)$ up to an additive entropy constant $H(Q_{\mathrm{target}})$ independent of model. When the model is constrained to the Gibbs family as in \eqref{eq:gibbs_solution}, minimizing CE corresponds to adjusting free-energy parameters (and indirectly logits $\tilde S$ and geometry weight $\lambda$) so that $\mathbb Q^\star$ matches $Q_{\mathrm{target}}$. Thus the training objective
    \[
    \mathcal L_{\mathrm{geo}}
    =
    -\sum_{b,p}\sum_{i\in\mathcal M_{b,p}} \tilde c_{b,p,i}\,\log \mathbb Q^\star(p,i)
    \]
    is precisely the empirical counterpart of the variational optimization \eqref{eq:free_energy} when choosing $Q_{\mathrm{target}}$ proportional to instance-local geometry weights.
\end{proof}

    \textbf{Additional properties (invariance and non-collapse).}
    
    \begin{itemize}
      \item \emph{Additive invariance.} If $E\mapsto E+c$ (for constant $c$), then the numerator of \eqref{eq:gibbs_solution} acquires factor $\exp(-c/\tau)$ independent of $(p,i)$ and cancels with the denominator; hence $\mathbb Q^\star$ is invariant to additive shifts of energy, equivalently to adding constants to $\tilde S$.
      \item \emph{Scaling and temperature.} If $E$ is multiplied by positive scalar $a>0$, then
      \[
      \exp\!\big(-\tfrac{1}{\tau} a E\big) = \exp\!\big(-\tfrac{1}{\tau/a} E\big),
      \]
      so multiplicative rescaling of $E$ can be absorbed into a reparametrization of $\tau$ (temperature).
      \item \emph{Non-collapse (positive entropy).} Because $\tau>0$ and the KL term penalizes zero entropy, the minimizer $\mathbb Q^\star$ has strictly positive entropy (unless the energy differences are arbitrarily large compared to $\tau$). In particular $\mathbb Q^\star$ is not a point mass unless the limit $\tau\downarrow0$ is taken.
      \item \emph{Instance-local perturbation.} If $A_{b,p}(i)$ is supported only on indices $i$ belonging to ground-truth instance $\mathcal M_{b,p}$, then the additive perturbation $\lambda A_{b,p}(i)$ only affects relative probabilities within that instance; it does not change ordering of energies between different instances except insofar as their partition sums change, and thus constitutes a \emph{conditional} (instance-wise) energy shaping.
    \end{itemize}
    
    \textbf{Limit cases and interpretation.}
    \begin{itemize}
      \item As $\tau\to\infty$, the KL penalty dominates and $\mathbb Q^\star$ tends to the uniform distribution $\mathbb U$ (max-entropy limit).
      \item As $\tau\to0^{+}$, $\mathbb Q^\star$ concentrates on the minimizers of $E(p,i)-\lambda A_{b,p}(i)$ (hard selection / argmax).
      \item As $\lambda\to0$, one recovers the standard Gibbs posterior based solely on $E$ (i.e. the semantic-only reweighting).
      \item Intermediate $(\tau,\lambda)$ trade off stability (entropy), semantic fidelity (alignment to $\tilde S$), and geometric consistency.
    \end{itemize}
    
    This completes the derivation and justification of the Gibbs reweighting and conditional energy shaping regularizers used in the main text.

\section{Comparative Evaluation of ExpAlign, Grounding DINO, and GLIP on Diverse Real-World Datasets in ODinW}
In our comparison of ExpAlign, Grounding DINO, and GLIP across the diverse real-world datasets in the ODinW benchmark, as presented in Table \ref{tab:comp_odinw}, ExpAlign demonstrates competitive overall performance with a slightly higher average score (22.4) than Grounding DINO (22.3) and a clear advantage over GLIP (19.6), while also showing strong gains on several challenging and domain-specific subsets.

For instance, ExpAlign substantially outperforms both baselines on datasets involving uncommon or underrepresented scenarios, such as MountainDewCommercial (45.46 vs. 25.46 for Grounding DINO and 21.60 for GLIP), ShellfishOpenImages* (42.63 vs. 29.56 and 25.90), MaskWearing (7.83 vs. 0.25 and 1.10), and PKLot\_640 (5.23 vs. 0.06 and 0.00). These improvements are likely attributable to ExpAlign's training on RefCOCO, which emphasizes referring expression comprehension and finer-grained grounding of objects in complex or natural-language contexts, helping the model better handle rare categories, occluded objects, or domain shifts not well covered in the O365 + GoldG + Cap4M pre-training corpus shared by Grounding DINO and GLIP.

Notably, on the ODinW-13 benchmark subset (marked with *), ExpAlign also achieves leading results in several cases (e.g., CottontailRabbits, EgoHands\_generic, pistols, VehiclesOpenImages), underscoring its enhanced generalization in high-quality, diverse open-world evaluation settings. These observations highlight the value of incorporating referring expression data during pre-training to boost robustness on out-of-distribution and long-tail categories in real-world object detection.

\begin{table*}[!htbp]
    \centering
    \begin{tabular}{l|ccc}
    \toprule
    Metric                   & GLIP-T & Grounding DINO T  & ExpAlign            \\ 
    \midrule
    Average Score            & 19.6   &  22.3  & \textbf{22.4}\\
    Median Score             & 5.1    &  \textbf{11.9}  & 10.4\\
    \midrule
    AerialMaritimeDrone\_large*      & \textbf{13.70 } &  10.30 &  11.15\\
    AerialMaritimeDrone\_tiled      & 12.60  &  17.50 &  \textbf{23.66}\\
    AmericanSignLanguageLetters     & \textbf{2.50}   &  0.78  & 2.34\\
    Aquarium*                 & 18.30  &  18.64 & \textbf{19.88}\\
    BCCD                     & 1.00   &  11.96 & \textbf{13.98}\\
    ChessPieces              & 10.00  &  \textbf{15.62} & 7.87\\
    CottontailRabbits*        & 69.70  &  67.61 & \textbf{81.34}\\
    DroneControl             & \textbf{5.10}   &  4.99  & 0.37\\
    EgoHands\_generic*        & 50.00  &  57.64 & \textbf{61.50}\\
    EgoHands\_specific       & 0.80   &  0.69  & \textbf{0.91}\\
    HardHatWorkers           & 3.00   &  \textbf{4.05}  & 3.30\\
    MaskWearing              & 1.10   &  0.25  & \textbf{7.83}\\
    MountainDewCommercial    & 21.60  &  25.46 & \textbf{45.46}\\
    NorthAmericaMushrooms*    & \textbf{75.10}  &  68.18 & 25.38\\
    OxfordPets\_by-breed     & 0.40   &  0.21  & \textbf{2.01}\\
    OxfordPets\_by-species   & 1.10   &  1.30  & \textbf{7.20}\\
    PKLot\_640               & 0.00   &  0.06  & \textbf{5.23}\\
    Packages*                 & \textbf{72.30}  &  60.53 & 71.65\\
    PascalVOC*                & 56.10  &  55.65 & \textbf{59.00}\\
    Raccoon*                  & 57.80  &  \textbf{60.07} & 46.29\\
    ShellfishOpenImages*      & 25.90  &  29.56 & \textbf{42.63}\\
    ThermalCheetah           & 2.70   &  \textbf{17.72} & 12.65\\
    UnoCards                 & 0.20   &  0.81  & \textbf{0.99}\\
    VehiclesOpenImages*       & 56.00  &  58.49 & \textbf{61.40}\\
    WildfireSmoke            & 2.30   &  \textbf{20.04} & 10.38\\
    boggleBoards             & 0.00   &  0.29  & \textbf{1.33}\\
    brackishUnderwater       & 3.70   &  1.47  & \textbf{3.70}\\
    dice                     & \textbf{1.10}   &  0.33  & 0.36\\
    openPoetryVision         & 0.00   &  0.05  & \textbf{0.16}\\
    pistols*                  & 49.80  &  66.99 & \textbf{75.69}\\
    plantdoc                 & 1.10   &  0.36  & \textbf{2.37}\\
    pothole*                  & 17.20  &  \textbf{25.21} & 7.30\\
    selfdrivingCar           & 8.00   &  \textbf{9.95}  & 8.21\\
    thermalDogsAndPeople*     & 43.70  &  \textbf{67.89} & 57.04\\
    websiteScreenshots       & 0.50   &  1.30  & \textbf{2.64}\\ 
    \bottomrule
    \end{tabular}
    \centering
    \caption{Comparison of ExpAlign, Grounding DINO, and GLIP on the ODinW benchmark.
Grounding DINO and GLIP are trained on Objects365, GoldG, and Cap4M using Swin-Tiny backbones.ExpAlign is trained on Objects365, GoldG, and RefCOCO using ConvNeXt-Tiny backbones. *denotes results belonging to the ODinW-13 benchmark.}
    \label{tab:comp_odinw}
\end{table*}

\section{GACO Pseudocode}

\begin{algorithm}[tb]
  \caption{Geometry-Aware Consistency Objective (GACO)}
  \label{alg:gaco}
  \begin{algorithmic}
    \STATE {\bfseries Input:} similarity map $sim \in [-1,1]$ of shape $[B, K, H, W]$, binary mask $M \in \{0,1\}$ of shape $[B, K, H, W]$, hyperparameters $\beta$, $adv\_clip$, $\epsilon$
    \STATE {\bfseries Output:} geometry consistency loss $\mathcal{L}_{\text{geo}}$

    \STATE Normalize $sim \gets sim / (|sim|_{\max} + \epsilon)$
    \STATE Compute logits $\gets sim.view(B, K \cdot H \cdot W)$
    \STATE Compute $\log p \gets \log\text{softmax}(\text{logits}, \dim=1)$

    \STATE Flatten $M$ to $M_{\text{flat}} \in [B, K \cdot H \cdot W]$
    \STATE Compute probability $prob_{\text{flat}} \gets \sigma(sim).view(B, K \cdot H \cdot W)$

    \STATE Initialize advantage-weighted loss $L_{\text{adv}} \gets 0$, denominator $denom \gets 0$

    \FOR{each batch $b = 0$ {\bfseries to} $B-1$}
        \STATE Find positive pixel indices $pos\_idx \gets$ where $M_{\text{flat}}[b] > 0.5$
        \IF{$pos\_idx$ is not empty}
            \STATE $R_{\text{pos}} \gets prob_{\text{flat}}[b, pos\_idx]$
            \STATE $\mu \gets \text{mean}(R_{\text{pos}})$, $\sigma \gets \text{std}(R_{\text{pos}}) + \epsilon$
            \STATE Advantage $A \gets (R_{\text{pos}} - \mu) / \sigma$
            \STATE Clamp $A \gets \text{clamp}(A, -adv\_clip, adv\_clip)$
            \STATE Accumulate $L_{\text{adv}} \gets L_{\text{adv}} - (A \cdot \log p[b, pos\_idx]).sum()$
            \STATE $denom \gets denom + |pos\_idx|$
        \ENDIF
    \ENDFOR

    \STATE $L_{\text{adv}} \gets L_{\text{adv}} / denom$ if $denom > 0$ else $0$

    \STATE $\mathcal{L}_{\text{geo}} \gets \beta \cdot L_{\text{adv}}$
  \end{algorithmic}
\end{algorithm}

This section provides the pseudocode of the Geometry-Aware Consistency Objective (GACO) used in all experiments. The objective operates on prompt-conditioned patch-level similarity maps and enforces consistency by reshaping the distribution of alignment scores within positive regions. Specifically, GACO treats the normalized similarity scores as an energy field over spatial locations and derives a Gibbs-style reweighting through a log-softmax normalization. Within each positive region, instance-level responses are standardized using region statistics, yielding an advantage signal that emphasizes relatively confident locations while preserving uncertainty. The final loss is computed as an advantage-weighted log-likelihood over masked locations, which corresponds to a constrained reweighting of spatial energies rather than explicit instance-level supervision. Algorithm~\ref{alg:gaco} summarizes the exact implementation used in our method.

\section{Hyper Paramerter Settings}
ExpAlign is trained using a two-stage protocol with frozen image and text encoders in both stages. Stage 1 focuses on semantic alignment with a moderate learning rate schedule and standard augmentations, while Stage 2 introduces geometry-aware consistency (GACO) and multi-positive contrastive weighting with reduced augmentation strength. Detailed hyperparameters are provided in Table~\ref{tab:hyper-parames}.

\begin{table*}[!htbp]
    \centering
    \begin{tabular}{l|cc}
    \toprule
    Hyper Paramerter         & Stage 1 Training & Stage 2 Training \\ 
    \midrule
    image size               & 640x640& 640x640  \\
    batch size               & 512   &  512  \\
    epochs                   & 30    &  20  \\
    warmup epochs            & 3     &  0   \\
    weight decay             & 0.025 &  0.025  \\
    initial learning rate    & 0.002 &  0.01  \\
    final learning rate fraction & 0.001 &  0.1  \\
    bias learning rate warmup & 0.0 &  0.0  \\
    momentum                 & 0.9 &  0.9  \\
    AMP training             & True &  True  \\
    freeze image encoder     & True &  True  \\
    freeze text encoder      & True &  True  \\
    multi-positive InfoNCE weight & 0.0 &  0.5  \\
    GACO weight              & 0.0 &  1.0  \\
    \midrule
    mosaic                   & 1.0   &  1.0  \\
    close mosaic             & 2     &  5  \\
    hsv h                    & 0.015 &  0.005   \\
    hsv s                    & 0.7   &  0.05  \\
    hsv v                    & 0.4   &  0.05  \\
    fliplr                   & 0.5   &  0.0  \\
    \bottomrule
    \end{tabular}
    \centering
    \caption{ExpAlign training hyper-parameters.}
    \label{tab:hyper-parames}
\end{table*}

\section{EAM Heatmap Visualizations for Negative Prompts}

Figure \ref{fig:eam-neg} presents the visualization of Explainable Attention Map (EAM) for negative sample prompt words on an example image of a girl in a sailor uniform. The left subfigure shows the original image, while the middle and right subfigures display EAM heatmaps for the positive prompt \textit{sailor uniform} and negative prompt \textit{black sailor uniform}, respectively.

As observed, for negative prompt, the EAM activations are more uniformly distributed across the background rather than concentrating on the foreground object (the uniform). This pattern suggests that the model suppresses the detection of negative prompts by diffusing attention, reducing false positives in irrelevant regions and enhancing overall robustness in prompt-guided tasks.

\begin{figure*}[!htbp]
    \centering
    \begin{subfigure}[t]{0.325\textwidth}
        \centering
        \includegraphics[width=\linewidth]{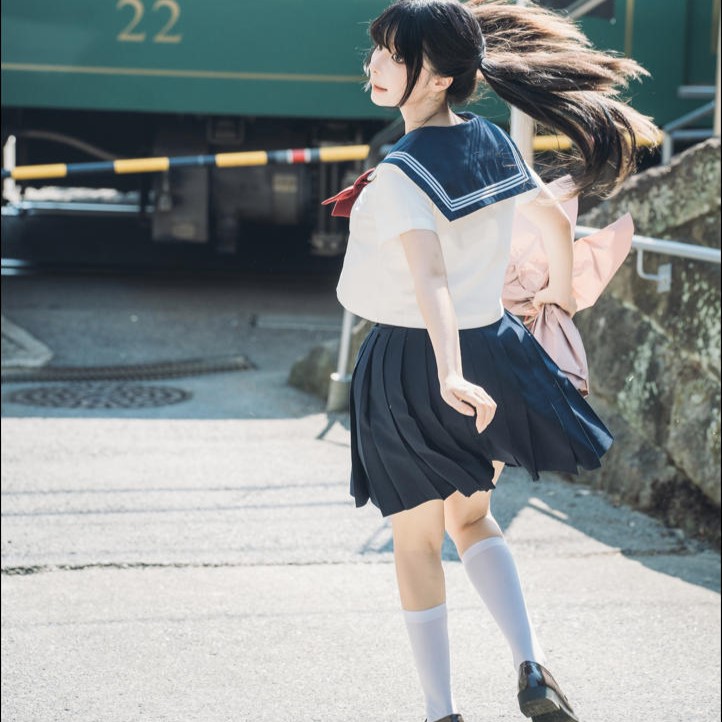}
        \caption{original image}
        \label{fig:app-visuala}
    \end{subfigure}\hfill
    \begin{subfigure}[t]{0.325\textwidth}
        \centering
        \includegraphics[width=\linewidth]{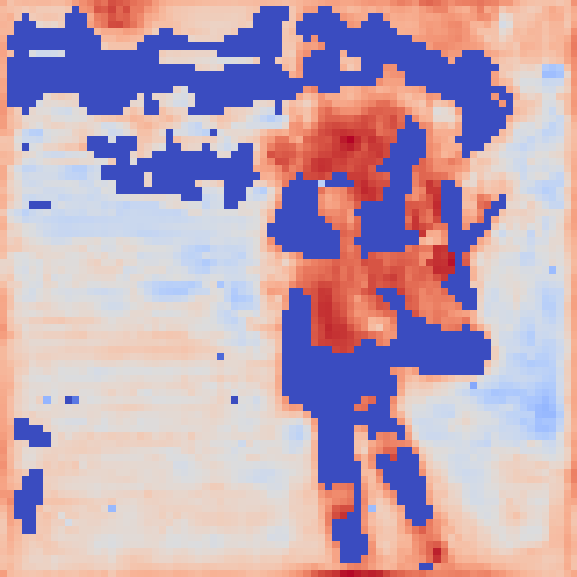}
        \caption{positive prompt: sailor uniform}
        \label{fig:visualposi}
    \end{subfigure}\hfill
    \begin{subfigure}[t]{0.325\textwidth}
        \centering
        \includegraphics[width=\linewidth]{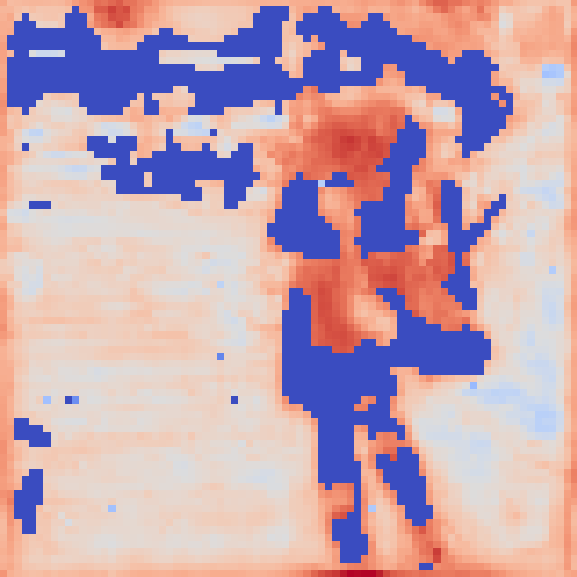}
        \caption{negative prompt: black sailor uniform}
        \label{fig:visualnega}
    \end{subfigure}
    \caption{EAM heatmaps for positive prompt \textit{sailor uniform} and negative prompt \textit{black sailor uniform}. Background-dominant activations indicate effective suppression of unseen negative prompts.}
    \label{fig:eam-neg}
\end{figure*}

\section{Impact of Global Negative Vocabulary}

During training, we observe that the composition and quality of the global negative vocabulary have a noticeable impact on performance, particularly for rare categories (AP$_r$).
Varying the negative prompt set—through different sampling strategies, vocabulary sizes, or semantic distributions—results in fluctuations of approximately $\pm 0.8\%$ in AP$_r$ on the LVIS minival split.
In contrast, the effect on overall AP as well as AP$_c$ and AP$_f$ is relatively limited, with variations within $\pm 0.2\%$.

This behavior suggests that rare-category representations in the CLIP embedding space are inherently more fragile and sensitive to interference from negative prompts.
When negative samples are semantically close to rare positives or occupy nearby regions in the embedding space, they can induce stronger gradient conflicts during contrastive alignment, disproportionately impairing the model’s ability to discriminate long-tail classes.
Frequent and common categories, which are more densely covered during vision--language pre-training, exhibit greater robustness to such perturbations.

We hypothesize that an effective negative vocabulary should occupy a ``sweet spot'' in the CLIP feature space: sufficiently separated from the positive (LVIS) distribution to suppress false activations, yet not so distant that the negatives become uninformative and yield weak or noisy gradients.
Negative sets that are overly similar to positives may lead to excessive suppression and hinder rare-category learning, while overly distant negatives may fail to provide meaningful discriminative supervision.
Identifying such a balanced negative distribution could further improve performance on LVIS, particularly for long-tail categories.

At present, however, there is no standardized metric or principled methodology to quantify the ``quality'' or ``difficulty'' of a global negative vocabulary in open-vocabulary detection.
Developing reliable criteria or adaptive strategies for negative vocabulary construction—such as embedding-aware sampling, online hard-negative mining, or dynamic vocabulary curation—remains an open challenge and a promising direction for future work.

\section{More Visualization Examples}
\label{app:vis_qualitative}

Figure~\ref{fig:emorevisual} and ~\ref{fig:morevisual} shows additional zero-shot detection and segmentation results of ExpAlign on diverse scenes with multi-object and detailed text prompts. The model demonstrates strong open-vocabulary grounding and precise instance masks across novel categories and complex compositions.

\begin{figure*}[!htbp]
    \centering
    \begin{subfigure}[t]{0.49\textwidth}
        \centering
        \includegraphics[width=\linewidth]{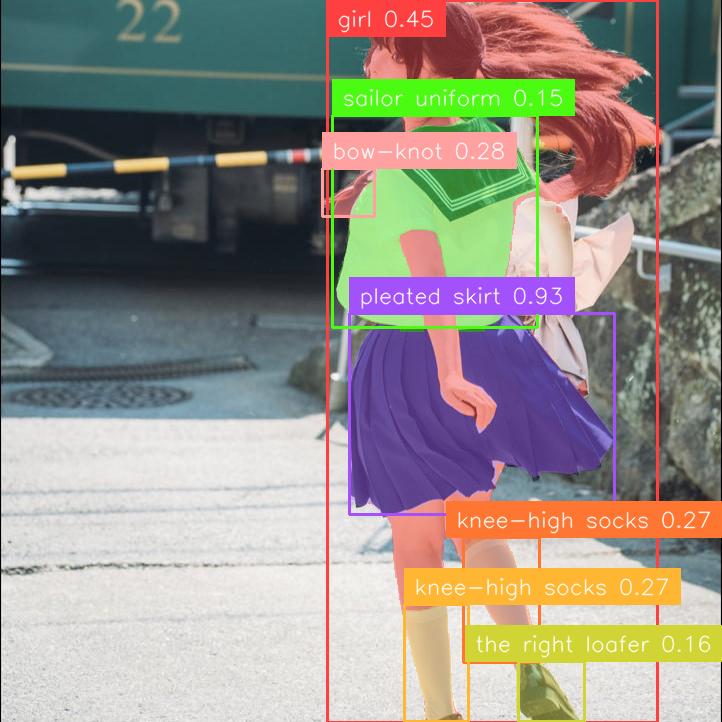}
        \caption{}
        \label{fig:emorevisuala}
    \end{subfigure}\hfill
    \begin{subfigure}[t]{0.49\textwidth}
        \centering
        \includegraphics[width=\linewidth]{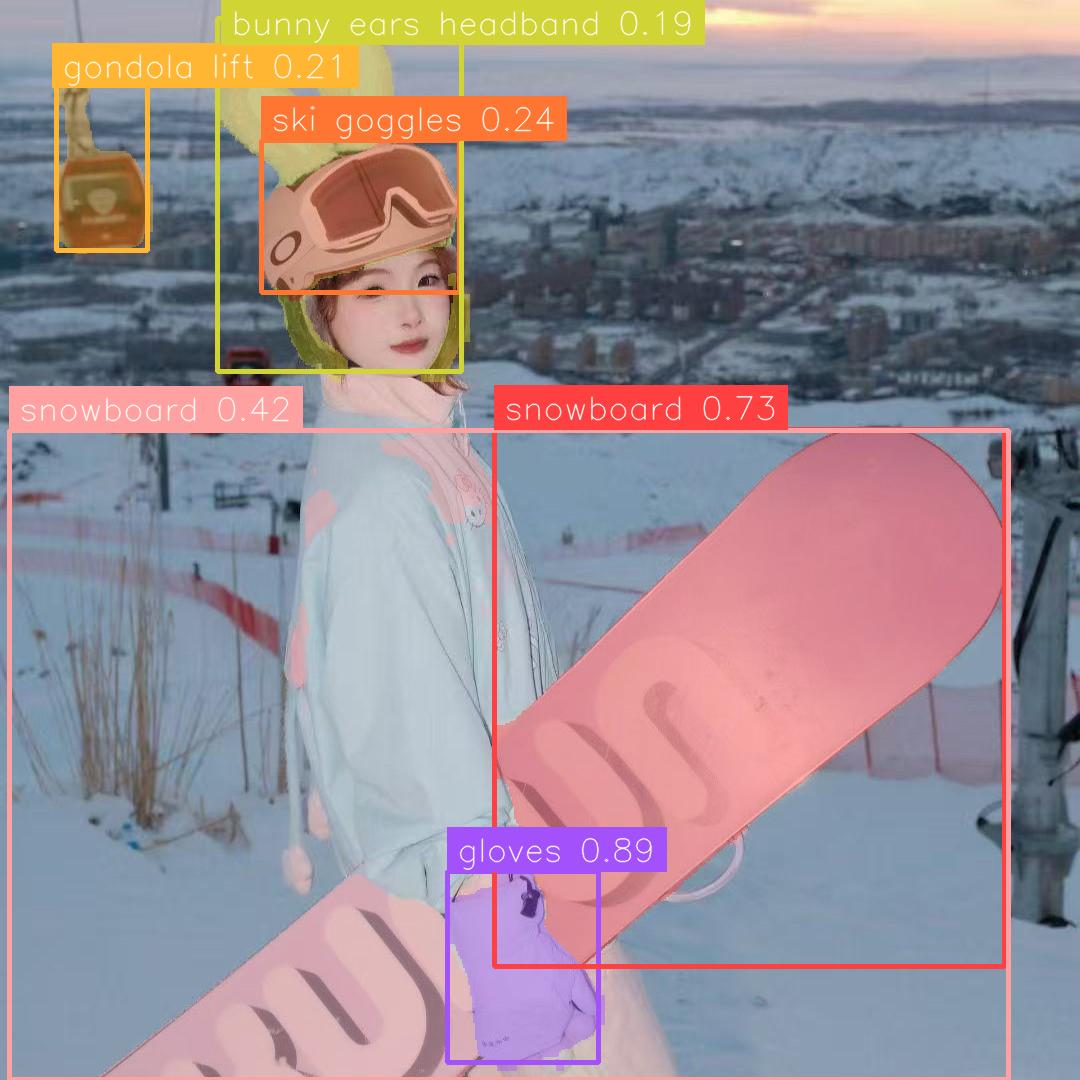}
        \caption{}
        \label{fig:emorevisualb}
    \end{subfigure}\\
    \begin{subfigure}[t]{0.49\textwidth}
        \centering
        \includegraphics[width=\linewidth]{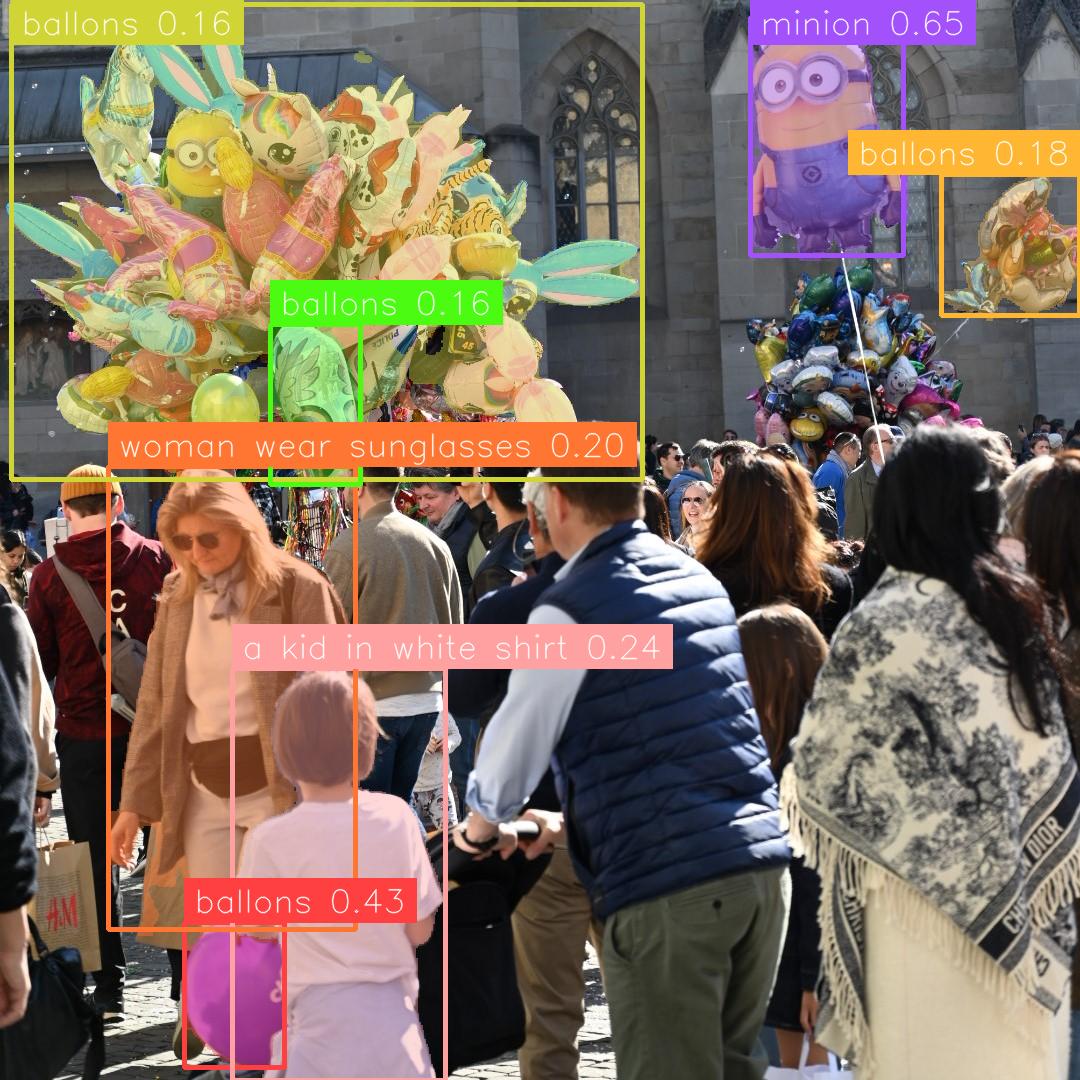}
        \caption{}
        \label{fig:morevisualc}
    \end{subfigure}\hfill
    \begin{subfigure}[t]{0.49\textwidth}
        \centering
        \includegraphics[width=\linewidth]{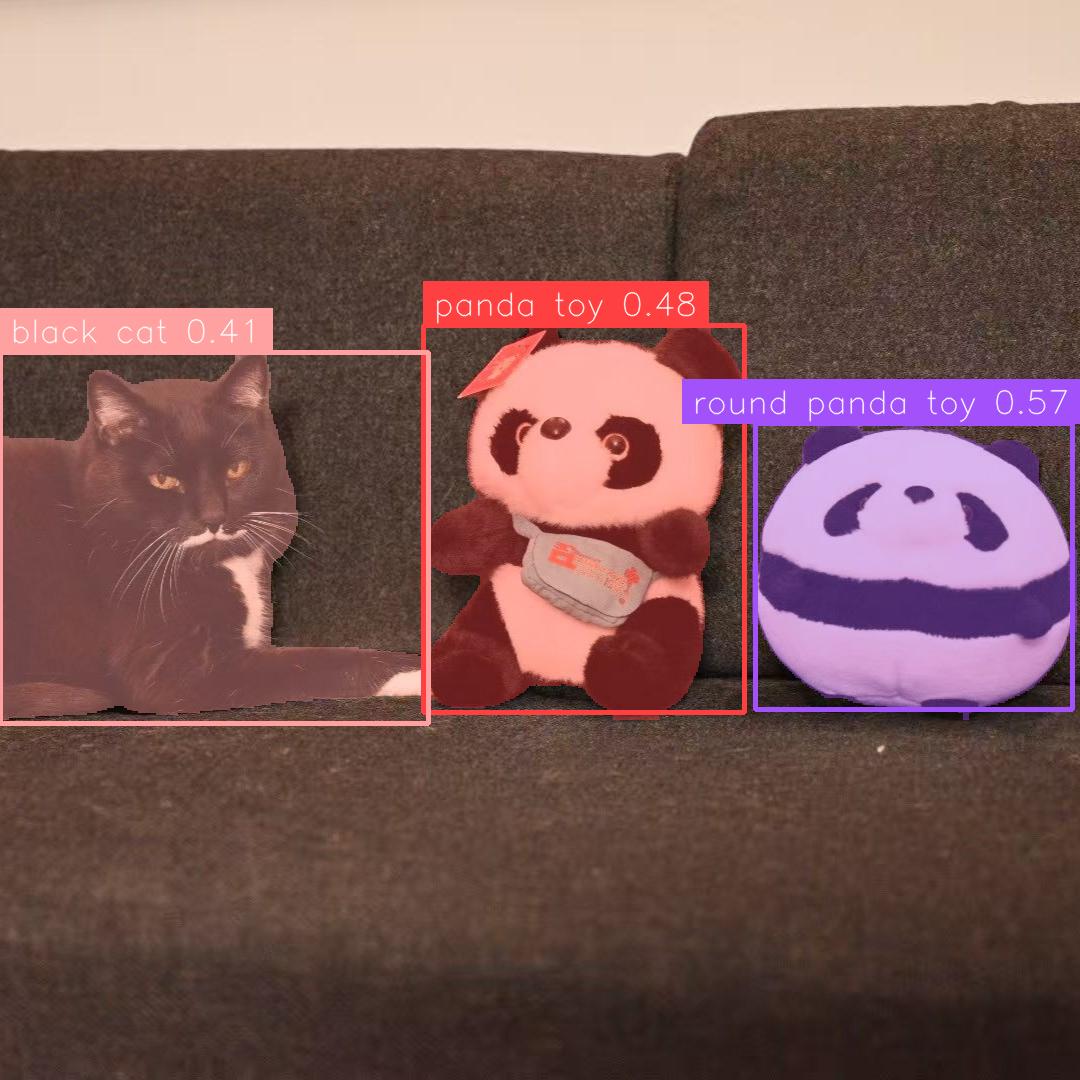}
        \caption{}
        \label{fig:morevisuald}
    \end{subfigure}
    \caption{(a) prompts: girl, sailor uniform, the right loafer, bow-knot, knee-high socks, pleated skirt. (b) prompts: snowboard, ski goggles, gondola lift, gloves, bunny ears headband. (c) prompts: minion, ballons, a kid in white shirt, woman wear sunglasses. (d) prompts: black cat, panda toy, round panda toy. Zoom in for better visual effect.}
    \label{fig:emorevisual}
\end{figure*}

\begin{figure*}[t]
    \centering
    \begin{subfigure}[t]{0.49\textwidth}
        \centering
        \includegraphics[width=\linewidth]{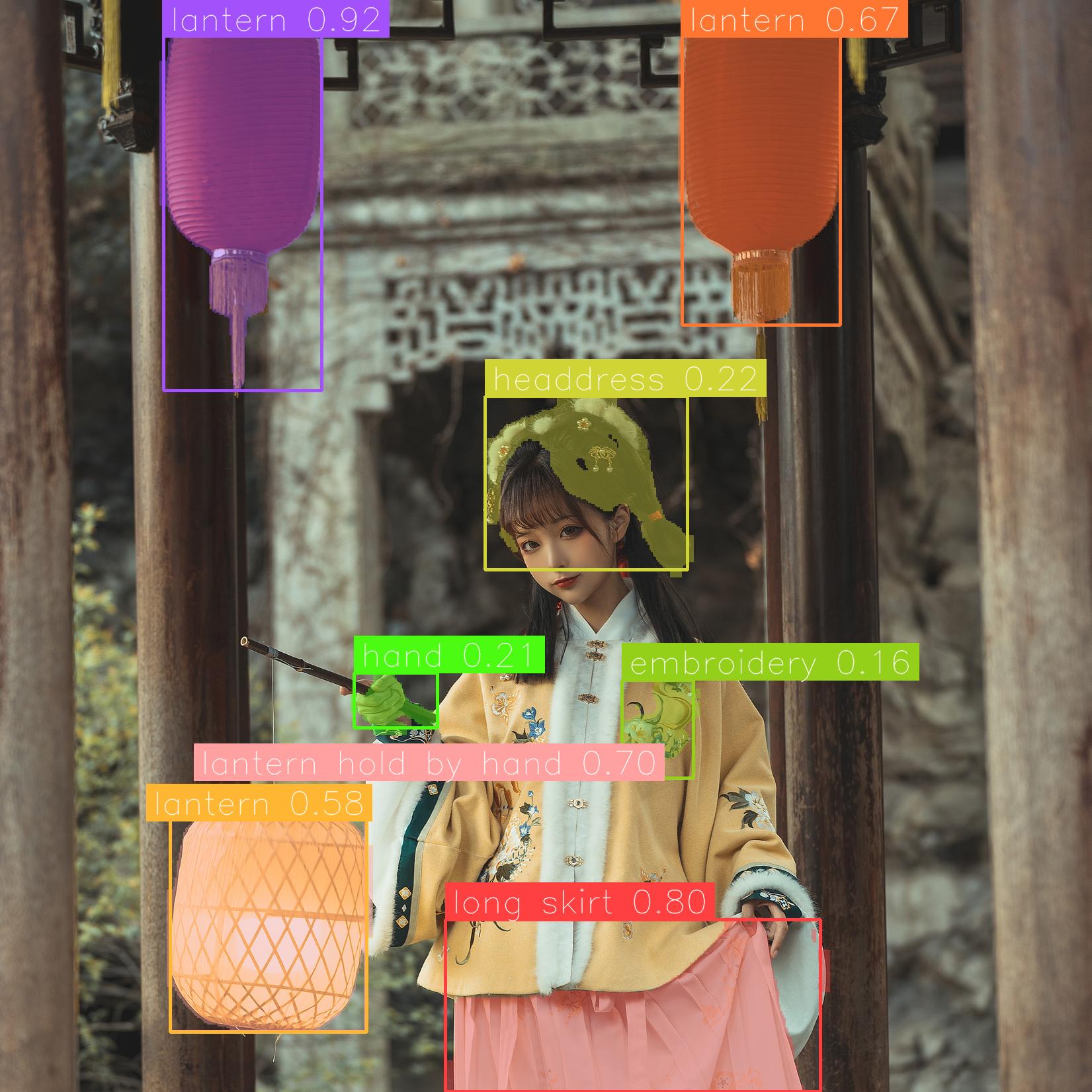}
        \caption{}
        \label{fig:morevisuala}
    \end{subfigure}\hfill
    \begin{subfigure}[t]{0.49\textwidth}
        \centering
        \includegraphics[width=\linewidth]{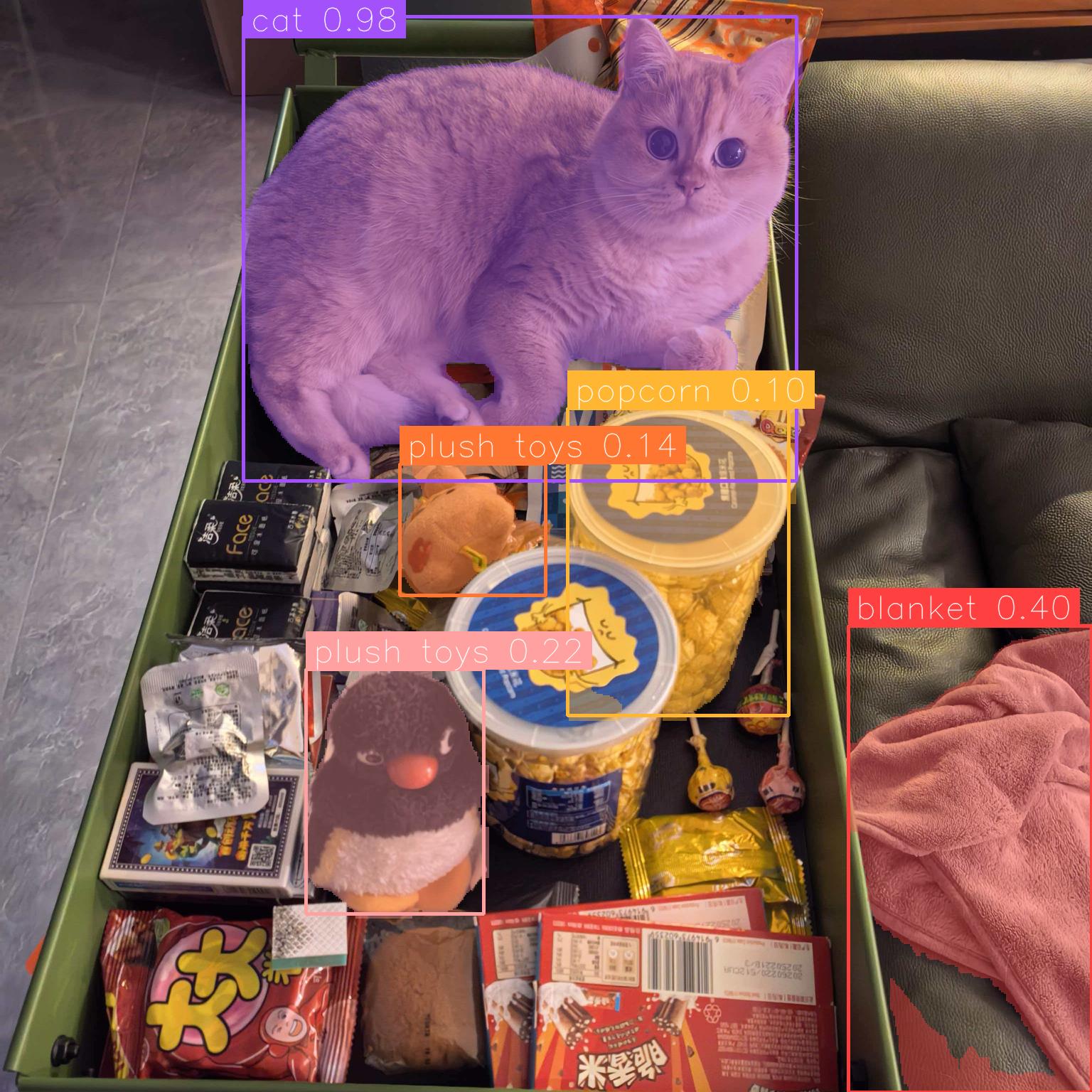}
        \caption{}
        \label{fig:morevisualb}
    \end{subfigure}\\
    \begin{subfigure}[t]{0.49\textwidth}
        \centering
        \includegraphics[width=\linewidth]{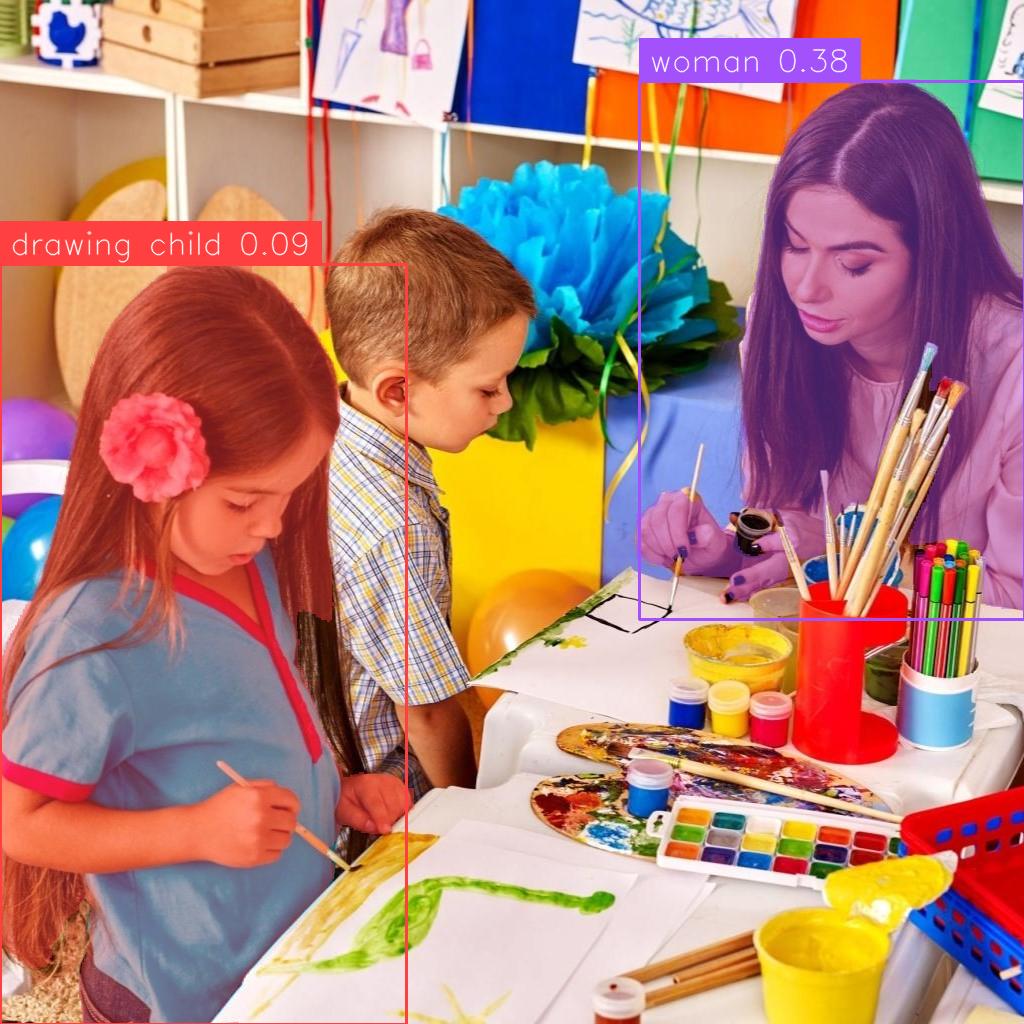}
        \caption{}
        \label{fig:morevisualc}
    \end{subfigure}\hfill
    \begin{subfigure}[t]{0.49\textwidth}
        \centering
        \includegraphics[width=\linewidth]{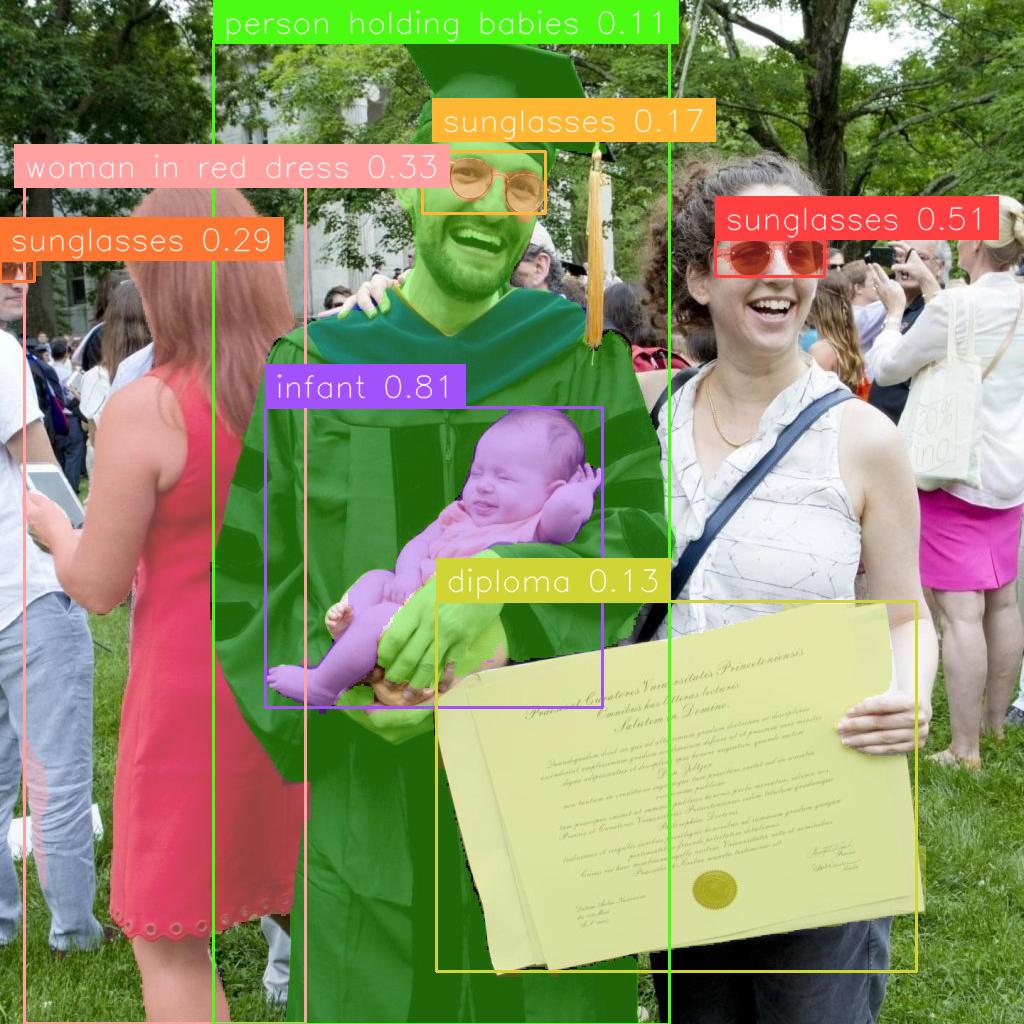}
        \caption{}
        \label{fig:morevisuald}
    \end{subfigure}
    \caption{(a) prompts: lantern, lantern hold by hand, hand, headdress, embroidery, long skirt. (b) prompts: cat, popcorn, blacket, plush toys. (c) prompts: drawing child, woman. (d) prompts: diploma, person holding babies, woman in red dress, infant, sunglasses. Zoom in for better visual effect.}
    \label{fig:morevisual}
\end{figure*}

\end{document}